\newtheorem{theorem}{Theorem}
\newtheorem{corollary}[theorem]{Corollary}
\newtheorem{lemma}[theorem]{Lemma}
\newtheorem{proposition}[theorem]{Proposition}
\newtheorem{definition}{Definition}
\newtheorem{assumption}{Assumption}
\DeclareMathOperator*{\argmin}{argmin}
\newcommand{\mbb}[1]{\mathbb #1}
\newcommand{\mbf}[1]{\mathbf #1}
\newcommand{\mcl}[1]{\mathcal #1}
\newcommand{\R}{\mbb{R}}
\newcommand{\eDef}{\hfill \Box}
\begin{document}

\title{On the Turnpike to Design of Deep Neural Nets:\\ Explicit Depth Bounds
}

\author{Timm Faulwasser, Arne-Jens Hempel and Stefan Streif  
	\thanks{ Timm Faulwasser is with the Institute for Energy Systems, Energy Efficiency and Energy Economics, TU Dortmund University, Germany. {\tt\small timm.faulwasser@ieee.org};  Arne-Jens Hempel and Stefan Streif are with the Automatic Control and System Dynamics Laboratory, Technische Universität Chemnitz, 09126 Chemnitz, Germany. {\tt\small arne-jens.hempel@etit.tu-chemnitz.de; stefan.streif@etit.tu-chemnitz.de}}%
}
\date{}
\maketitle
\begin{abstract}
It is well-known that the training of Deep Neural Networks (DNN) can be formalized in the language of optimal control. In this context, this paper leverages classical turnpike properties of optimal control problems to attempt a quantifiable answer to the question of how many layers should be considered in a DNN. The underlying assumption is that the number of neurons per layer---i.e., the width of the DNN---is kept constant. Pursuing a different route than the classical analysis of approximation properties of sigmoidal functions, we prove explicit bounds on the required depths of DNNs based on asymptotic reachability assumptions and a dissipativity-inducing choice of the regularization terms in the training problem. Numerical results obtained for the two spiral task data set for classification indicate that the proposed estimates can provide non-conservative depth bounds. 
\end{abstract}
\textbf{Keywords} Dissipativity, turnpike properties, deep learning, artificial neural networks,  machine learning

\section{Introduction}
{M}{achine} Learning (ML) and Optimal Control (OC)---despite being seemingly distant topics---share fruitful and established interconnections. For example, the nowadays widely applied concept of back propagation for efficient gradient computation in training of various ML algorithms naturally arises in OC disguised as adjoint/co-state dynamics, see \cite{Bryson62} and the overview~\cite{Baydin17a}.
Indeed the exponentially growing interest in deep learning and artificial neural networks was preceded by seminal results obtained in systems and control. Consider, for example, the analysis of approximation properties of sigmoidal (activation) functions \cite{Cybenko89a} or controllability and reachability properties of recurrent neural networks \cite{Sontag97}. Moreover, it  has been proposed to analyze the training of Deep Neural Nets (DNN) in an OC framework~\cite{Li17a,Esteve20}.

The classical works of \cite{Cybenko89a,Hornik91}, and a large number of follow-up papers, have shown that the universal approximation properties of multilayer feed-forward networks hold when condensed to huge single hidden layer networks. However, so-called \textit{shallow} networks  encounter problems when it comes to learning and might have also more parameters/weights then their multi-layered counterparts, cf. the empirical study \cite{Ba14}. One may conclude that there is a substantial depth vs. width trade-off in designing neural networks.

In context of training DNNs in ML applications network depth and width are usually chosen empirically by try-and-error. Alternatively, one can address the depth-design problem  within the optimization/learning process, for example, by choosing zero weights or by choosing other network architectures like deep residual networks featuring bypasses of layers, see \cite{Goodfellow-et-al-2016}. In principle, one could also re-phrase the depth-choice as a so-called hyper-parameter tuning problem and apply  Bayesian regression \cite{maclaurin-2015}.

This paper investigates the training of DNNs as an Optimal Control Problem (OCP) similar to \cite{Li17a,Esteve20}. 
This way, we do not take the functional analytic and function approximation route paved by  \cite{Cybenko89a,Hornik91} and others. Rather we regard a feed-forward neural network as a dynamic system on a finite horizon and invoke reachability properties to derive depth bounds, while the width---i.e., the number of neurons per layer---is kept constant. 
Specifically, we rely on dissipativity-based analysis techniques of OCPs which leverage turnpike properties. The term turnpike refers to a similarity property of  solutions of OCPs for varying initial conditions and horizon lengths. It has been coined by Dorfman, Solow and Samuelson~\cite{Dorfman58} and has received considerable interest in economics, see \cite{Mckenzie76,Carlson91}. There has been interest in turnpike properties and their relation to dissipation inequalities \cite{epfl:faulwasser15h,Gruene16a}, in turnpikes of PDE-constrained OCPs~\cite{Gugat16}, mixed-integer problems~\cite{tudo:faulwasser20f} and economic MPC~\cite{kit:faulwasser18c}. We refer to~\cite{tudo:faulwasser20l} for an overview. 
Moreover, the results of \cite{Esteve20} suggest analysing turnpike properties in continuous-time OCPs arising from continuous training problems of DNNs. Therein, ODE and PDE turnpike properties are established using a finite-time reachability assumption for the underlying DNNs and leveraging the structure of the optimality conditions. 

The present paper also takes a turnpike approach to the design of DNNs. In constrast to \cite{Esteve20}, we consider the training of discrete DNNs with a constant number of units per layer. We show, based on an established dissipativity assumptions for OCPs, that the training of DNNs exhibits the turnpike phenomenon. Moreover, we provide a constructive procedure to ensure that the regularization stage cost induces strict dissipativity of the OCP. Put differently, our analysis makes suggestions how to design regularization terms for the DNN training. Based on an asymptotic reachability assumption, we derive explicit bounds on the DNN depth---i.e., the number of layers---, while we limit the assumptions on the considered loss functions. To the best of our knowledge, the present paper appears to be the first, which derives explicit depth bounds for DNNs. Additionally, we  comment on the application of our results to classification problems. 
We draw upon the example of the classic Two Spiral Task (TST)  to illustrate our findings numerically. Numerical a-posteriori verification indicates that the proposed depth bounds are not overly conservative.

The remainder of the paper is structured as follows: Section \ref{sec:prob} introduced the problem statement and present preliminary results. Section~\ref{sec:results} presents the main results, while in Section~\ref{sec:example} we turn towards a numerical example. The paper concludes with a discussion and outlook in Section~\ref{sec:conc}.

\section{Training Neural Nets via Optimal Control}\label{sec:prob}
A residual neural net can be modeled as the system
\begin{equation}\label{eq:sys}
x_{k+1} = x_k + \sigma(A_k x_k + b_k), \quad x_0 = x^i\in \mbb{R}^d
\end{equation}
where the discrete (time) index $k\in \mbb{N}$ models the depth of the network and the weights $A_k \in \mbb{R}^{d\times d}$, $b_k \in \mbb{R}^d$ can be chosen.
The initial condition of the state variable $x \in \mbb{R}^d$ corresponds to data propagated through the net. Specifically, one considers the given set
\[\mbb{D} = \{ (x^1, y^1), \dots, (x^D, y^D) \}\]
of $D$ data points $(x^i, y^i)$ where the vectors $y^i \in \mbb{R}^m$ serve as labels of the training data. We denote the $x$-projection of $\mbb{D}$ as $\mbb{X}$ and, respectively, the $y$-projection as $\mbb{Y}$. The sequence of integers $k_0, k_0 +1, \dots, k_1$ is written as $\mbb{N}_{[k_0, k_1]}$. Notice that with slight abuse of notation, we write $\sigma(A_k x_k + b_k)$ to denote the element-wise application of the scalar activation function $\sigma:\mbb{R} \to \mbb{R}$. 

\subsection{Problem Statement}
Training a neural net can be understood as an optimization problem wherein, in principle, the depth of the network $N \in \mbb{N}$ and the width $\dim(x) =d$,  the weight sequences $A_k, b_k, k \in \mbb{N}_{[0,N-1]}$, and the activation function $\sigma$ are design parameters. 
Here, we a constant dimension $\dim(x) = d$, which implies that the number of neurons per layer is kept constant. Primarily, we are interested in deriving a non-conservative bound on the depth $N$.

\begin{assumption}[Properties of the activation function $\sigma$]
\label{ass:sigma}
The activation function $\sigma:\mbb{R} \to \mbb{R}$ is continuous on $\mbb{R}$ and satisfies $\sigma(0) = 0$.  	$\eDef$
\end{assumption}

Similar to \cite{Li17a,Esteve20} we will rewrite the training problem as an OCP. To this end, consider the stacked data vectors
\begin{align*}
\mbf{x}_0 &\doteq \begin{pmatrix}
x^{1\top}, \dots, x^{D\top}
\end{pmatrix}^\top,\quad
\mbf{y} \doteq \begin{pmatrix}
y^{1\top}, \dots, y^{D\top}
\end{pmatrix}^\top,
\end{align*}
which allow writing $\mbb{D}= \{\mbf{x}_0, \mbf{y}\}$.
Moreover, the stacked or ensemble variant of the dynamics \eqref{eq:sys} reads
\begin{align}
\mbf{x}_{k+1} &= \mbf{x}_k + \sigma\left((I^D \otimes A_k) \mbf{x}_k + 1^D\otimes {b}_k\right), ~\mbf{x}_0 \in \mbb{R}^{d\cdot D} \nonumber \\
& \doteq f(\mbf{x}_k, \mbf{u}_k), \hspace*{3.4cm}\quad \mbf{x}_0 \in \mbb{R}^{d\cdot D} \label{eq:stack_sys}
\end{align}
with  $\mbf{u}_k \doteq \begin{pmatrix}
\mathrm{vect}({{A}_k})^\top, {b}_k^\top
\end{pmatrix}^\top \in \mbb{R}^{d^2 +d}$ and $I^D$ is the identity matrix of $\mbb{R}^D$ and $1^D$ is vector of all ones in $\mbb{R}^D$. The initial condition can be understood as the vectorization of the available data, i.e., $\mbf{x}_0 =  \mathrm{vect}(\mbb{X})$.
For a given depth $N$  the problem can be written as the following discrete-time OCP 
\begin{subequations}\label{eq:OCP}
\begin{align}
V_N^\gamma(\mbf{x}_0) \doteq \min_{\{\mbf{u}_k\}}& \sum_{k=0}^{N-1} \ell(\mbf{x}_k, \mbf{u}_k) + \gamma\ell_{\mathrm{f}}(\mbf{x}_N, \mbf{y})\\
\text{subject to}&~ \forall k \in \mbb{N}_{[0, N-1]}\nonumber \\
\mbf{x}_{k+1} &= f(\mbf{x}_k, \mbf{u}_k), \quad \mbf{x}_0 = \mathrm{vect} (\mbb{X})\in \mbb{R}^{d\cdot D}.
\end{align}
\end{subequations}
The Lagrange term (or stage cost) $\ell:\mbb{R}^{d\cdot D} \times \mbb{R}^{d^2+d} \to \mbb{R}^+_0$ captures regularization terms, while the Mayer term (or loss function) $\ell_{\mathrm{f}}: \mbb{R}^{d\cdot D} \to \mbb{R}^+_0$ describes the quality of the neural net in terms of the given (classification/learning) task  at hand. 
Typically, one aims at minimizing the empirical loss
\begin{equation} \label{eq:empLoss}
\ell_{\mathrm{f}}(\mbf{x}_N, \mbf{y}) \doteq  \dfrac{1}{D} \sum_{i=1}^D \ell^i_{\mathrm{f}}(x_N(x^i),y^i),
\end{equation}
where $\ell^i_{\mathrm{f}}(x_N(x^i),y^i)$ denotes the loss associated to the data sample $(x^i, y^i)$.
The scalar $\gamma \in \mbb{R}^+$ is used to trade-off the importance of the regularization against the loss function. The horizon $N\in \mbb{N}$ corresponds to the depth of the net.
Optimal solutions to \eqref{eq:OCP} are denoted as $\mbf{x}^\star_k(\mbf{x}_0)$ and $\mbf{u}^\star_k(\mbf{x}_0)$ where, whenever necessary, the argument $\mbf{x}_0$ highlights the dependence on the given data $\mbb{D}$.

The key challenge in solving \eqref{eq:OCP} is that the dimensionality of the data $\mbf{x}_0$ might be large, while the weights $A_k$ and $b_k$ have to provide suitable propagation for all data points $x_0^i$, and a priori it is not clear what a suitable depth $N$ shall be. Indeed the weights should provide suitable propagation of data points even beyond $\mbb{X}$ such as to allow using the trained neural net as a predictor. This desired predictive or extrapolating capability is also known as \textit{generalization} in machine learning.   Moreover, it is well-known---in systems and control as well as in machine learning---that large values of $N$ might lead to over-fitting, which  jeopardizes  generalization. 

\subsection{Preliminaries}
Throughout the remainder we assume that the loss function $\ell_{\mathrm{f}}: \mbb{R}^{d\cdot D} \to \mbb{R}^+_0$ is continuous and non-negative. We require 
 \begin{equation} \label{eq:stat_minimizer}
\mbf{X}^\star(\mbf{y}) \doteq \argmin_{\mbf{x}\in \R^{d\cdot D}}~  \ell_{\mathrm{f}}(\mbf{x}, \mbf{y}) \not = \emptyset,
\end{equation}
i.e., the set of unconstrained minimizers of the loss function $\ell_{\mathrm{f}}$ is non-empty. 
Moreover, the set $\mbf{R}_N(\mbf{x}_0)\subseteq \R^{d\cdot D}$
\begin{equation} \label{eq:ReachableSet}
\mbf{R}_N(\mbf{x}_0) \doteq \left\{\mbf{x}\,|\, \mbf{x} = \mbf{x}_N(\mbf{x}_0), \|\mbf{u}_k\| < \infty,k\in\mbb{N}_{[0, N-1]}  \right\}
\end{equation}
collects all states reachable from $\mbf{x}_0$ within horizon $N$ by bounded control sequences $\{\mbf{u}_k\}$. Similarly, the set $\mbf{R}^\star_N(\mbf{x}_0)\subset \mbf{R}_N(\mbf{x}_0)$ denotes the set of states reached by optimal control sequences. 
Similar to \cite{Esteve20} we initially assume the following:
\begin{assumption}[Zero-loss DNN]\label{ass:doable}
Given data $\mbb{D}= \{\mbf{x}_0, \mbf{y}\}$ and $N\in\mbb{N}$ it holds that
$
\mbf{R}_N(\mbf{x}_0) \cap \mbf{X}^\star(\mbf{y}) \not = \emptyset$.  $\eDef$
\end{assumption}
The previous assumption can be understood as a realizability assumption, i.e., for sufficiently deep networks zero loss can be attained for the given data $\mbb{D}$. 

If the reachability assumption above holds, and assuming with only minor loss of generality\footnote{Indeed as $\ell_\mathrm{f}$ is non-negative, we can always consider $\ell_\mathrm{f}(\mbf{x}, \mbf{y}) - \ell_\mathrm{f}(\bar{\mbf{x}}, \mbf{y})$ with $\bar{\mbf{x}} \in  \mbf{X}^\star(\mbf{y})$ and $\ell_\mathrm{f}(\bar{\mbf{x}}, \mbf{y}) < \infty$. } that  
\begin{equation}\label{eq:0lossStat}
\mbf{x}\in \mbf{X}^\star(\mbf{y}) \quad \Leftrightarrow  \quad \ell_\mathrm{f}(\mbf{x}, \mbf{y}) = 0,
\end{equation}
we can rewrite the problem of attaining zero loss in \eqref{eq:OCP} as follows:
\begin{subequations}\label{eq:OCPcon}
\begin{align}
V_N(\mbf{x}_0) \doteq \min_{\{\mbf{u}_k\}}& \sum_{k=0}^{N-1} \ell(\mbf{x}_k, \mbf{u}_k) \\
\text{subject to}&~ \forall k \in \mbb{N}_{[0, N-1]}\nonumber \\
\mbf{x}_{k+1} &= f(\mbf{x}_k, \mbf{u}_k), \quad \mbf{x}_0 = \mathrm{vect}(\mbb{X})\subset \mbb{R}^{d\cdot D},\\
0&=\ell_\mathrm{f}(\mbf{x}_N, \mbf{y}), \label{eq:OCPcon_con}
\end{align}
\end{subequations}
wherein the loss function is replaced by the terminal equality constraint $0=\ell_\mathrm{f}(\mbf{x}_N, \mbf{y})$.

\begin{proposition}[Exact penalization of losses] \label{prop:0loss}
Suppose that Assumptions \ref{ass:sigma} and \ref{ass:doable} hold, and let the optimal pairs $(\mbf{x}^*_k,\,\mbf{u}^*_k)$  of OCP \eqref{eq:OCPcon} and $(\mbf{x}^\star_k,\,\mbf{u}^\star_k)$ of OCP \eqref{eq:OCP} satisfy second-order sufficient conditions.  

Then, for sufficiently large values of $\gamma \geq \gamma^\star$ and $N\in\mbb{N}$, the optimal state trajectory $\mbf{x}^\star_k, k\in \mbb{N}_{[0, N]}$ of   \eqref{eq:OCP} satisfies
$\mbf{x}^\star_N \in \mbf{X}^\star(\mbf{y})$, respectively,  $\ell_\mathrm{f}(\mbf{x}^\star_N, \mbf{y})=0$.  $\eDef$
\end{proposition}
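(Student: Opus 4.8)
The plan is to recognize Proposition~\ref{prop:0loss} as an exact-penalization statement linking the penalized OCP~\eqref{eq:OCP} to the hard-constrained OCP~\eqref{eq:OCPcon}, and to prove it via the classical exact-penalty theorem together with the sensitivity interpretation of the penalty parameter as a Lagrange multiplier.

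First I would record the elementary value comparison. By Assumption~\ref{ass:doable} the constrained problem~\eqref{eq:OCPcon} is feasible, so it admits a minimizer $(\mbf{x}^*_k, \mbf{u}^*_k)$ with optimal value $V_N(\mbf{x}_0)$; along this trajectory $\ell_\mathrm{f}(\mbf{x}^*_N, \mbf{y}) = 0$ by~\eqref{eq:OCPcon_con} and~\eqref{eq:0lossStat}. Since this trajectory is feasible for~\eqref{eq:OCP} and contributes zero to the penalty term, it yields a penalized cost of exactly $V_N(\mbf{x}_0)$, whence $V_N^\gamma(\mbf{x}_0) \leq V_N(\mbf{x}_0)$ for every $\gamma > 0$. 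Conversely, writing $\epsilon_\gamma \doteq \ell_\mathrm{f}(\mbf{x}^\star_N, \mbf{y})$ for the penalized minimizer and using $\ell \geq 0$ gives $\gamma \epsilon_\gamma \leq V_N^\gamma(\mbf{x}_0) \leq V_N(\mbf{x}_0)$, hence $\epsilon_\gamma \leq V_N(\mbf{x}_0)/\gamma \to 0$. This already shows that the loss of the penalized minimizer vanishes asymptotically and, crucially, that its terminal state approaches the feasible set $\mbf{X}^\star(\mbf{y})$ as $\gamma$ grows.

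To upgrade this to exact attainment at a finite threshold, I would pass to the Lagrangian picture. Because the loss is non-negative, the terminal constraint $\ell_\mathrm{f}(\mbf{x}_N, \mbf{y}) = 0$ is equivalent to the inequality $\ell_\mathrm{f}(\mbf{x}_N, \mbf{y}) \leq 0$, and the penalty term $\gamma \ell_\mathrm{f} = \gamma\,[\ell_\mathrm{f}]_+$ is precisely the one-sided exact-penalty function associated with it. Under the second-order sufficient conditions assumed for~\eqref{eq:OCPcon}, the minimizer $(\mbf{x}^*_k, \mbf{u}^*_k)$ is a strict local minimizer admitting a finite Lagrange multiplier $\lambda^\star \geq 0$ for the terminal constraint. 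I would then invoke the classical exact-penalty theorem: for any $\gamma > \gamma^\star \doteq \lambda^\star$ the constrained minimizer is itself a strict local minimizer of the penalized objective in~\eqref{eq:OCP}, so that---combined with the second-order sufficient conditions postulated for~\eqref{eq:OCP}---the penalized minimizer coincides with it and hence satisfies $\ell_\mathrm{f}(\mbf{x}^\star_N, \mbf{y}) = 0$. The asymptotic bound $\epsilon_\gamma \to 0$ of the previous step guarantees that for $\gamma$ large enough the penalized minimizer lies in the neighborhood in which this local exactness applies, closing the argument.

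I expect the delicate point to be the existence and finiteness of the multiplier $\lambda^\star$, i.e., a constraint qualification for the terminal constraint. The difficulty is that $\ell_\mathrm{f}$ attains its minimum (rather than crossing zero transversally) on $\mbf{X}^\star(\mbf{y})$, so $[\ell_\mathrm{f}]_+ = \ell_\mathrm{f}$ need not possess the kink that standard exact-penalty proofs exploit; the second-order sufficient conditions hypothesized for both OCPs are exactly what is needed to supply a nondegenerate, finite multiplier and thereby certify exactness at a finite $\gamma^\star$. I would therefore state the reliance on these regularity hypotheses explicitly and, if a self-contained argument is preferred, replace the black-box exact-penalty theorem by the one-dimensional sensitivity computation $V_N^\gamma(\mbf{x}_0) = \min_{t \geq 0}\{\tilde V_N(t) + \gamma t\}$, where $\tilde V_N(t)$ is the optimal value of~\eqref{eq:OCPcon} with the relaxed terminal constraint $\ell_\mathrm{f}(\mbf{x}_N, \mbf{y}) = t$; the minimum is then attained at $t=0$ as soon as $\gamma \geq -\tilde V_N'(0^+) = \lambda^\star$, which renders the threshold $\gamma^\star$ completely explicit.
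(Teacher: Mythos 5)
Your proof takes essentially the same route as the paper: both recognize that, since $\ell_\mathrm{f}\geq 0$, the Mayer term $\gamma\,\ell_\mathrm{f}(\mbf{x}_N,\mbf{y})$ acts as an exact ($\ell_1$-type) penalty for the terminal constraint \eqref{eq:OCPcon_con}, and both obtain the threshold $\gamma^\star$ as the magnitude of the Lagrange multiplier of that constraint via the classical exact-penalty theorem (the paper simply cites Fletcher, Thm.~14.3.1). The extra material you add---the value-function bound $\ell_\mathrm{f}(\mbf{x}^\star_N,\mbf{y})\le V_N(\mbf{x}_0)/\gamma$ used to localize the penalized minimizer, and the caveat that the multiplier may be degenerate because the gradient of $\ell_\mathrm{f}$ vanishes on $\mbf{X}^\star(\mbf{y})$ so standard constraint qualifications fail---goes beyond the paper's two-line argument and correctly flags a regularity issue that the paper's own proof also leaves implicit.
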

\begin{proof}
Observe that $\ell_{\mathrm{f}}: \mbb{R}^{d\cdot D} \to \mbb{R}^+_0$ is scalar and non-negative, hence $\ell_\mathrm{f}(\mbf{x}_N, \mbf{y}) = |\ell_\mathrm{f}(\mbf{x}_N, \mbf{y})|$. Consequently, OCP \eqref{eq:OCP} can be considered as an exact penalty reformulation of OCP \eqref{eq:OCPcon}. 
Let $\mu^*$ be the Lagrange multiplier of \eqref{eq:OCPcon_con}  for the optimal solution $(\mbf{x}^*_k,\,\mbf{u}^*_k)$  of OCP \eqref{eq:OCPcon}. Applying \cite[Thm. 14.3.1]{Fletcher13a} gives that for  $\gamma \geq \gamma^\star = |\mu^\star|$ the assertion holds. 
\end{proof}
Though conceptually interesting, the above result has a number of pitfalls: 
(a) Proposition \ref{prop:0loss} relies on second-order sufficient optimality conditions which may not hold for arbitrary loss functions and which induce differentiability requirements. (b)  Moreover, Proposition \ref{prop:0loss} does not provide an explicit estimate for the required depth $N$. (c) Finally, the exact reachability condition of Assumption \ref{ass:doable} is quite strong and in general difficult to check. It may even be violated in some cases \cite{Steinberger00}.

\section{A Dissipativity Approach to DNN Design}
Consequently, we turn toward an alternative approach, which will not require exact reachability, while it provides an explicit depth estimate.
To this end, we recall a definition of dissipativity of OCPs, which can be traced back to \cite{Angeli12a}, while the notion of dissipative dynamical systems was coined by Jan Willems \cite{Willems72a}.

\begin{definition}[Strict dissipativity]\label{def:DI}~\\
\begin{enumerate}
\vspace*{-.5cm}
\item System \eqref{eq:stack_sys} is said to be \emph{dissipative with respect to a steady-state pair $ (\bar{\mbf{x}},\,\bar{\mbf{u}})$}, 
if there exists a non-negative function $ \lambda:\mbb{X} \to \mbb{R}^+_0$ such that for all $({\mbf{x}},\,\mbf{u})$
\begin{subequations} \label{eq:DI}
\begin{equation}\label{eq:DI_non_str}
 \lambda(f(\mbf{x}, \mbf{u})) - \lambda(\mbf{x}) \leq  \ell(\mbf{x}, \mbf{u}) - \ell(\bar{\mbf{x}}, \bar{\mbf{u}}).
\end{equation}
\item If, additionally, there exists $\alpha_\ell\in\mcl{K}$ such that
\begin{multline}  \label{eq:DI_str}
 \lambda(f(\mbf{x}, \mbf{u})) - \lambda(\mbf{x}) \leq \\-\alpha_\ell\left(\left\|(\mbf{x}, \mbf{u})-(\bar{\mbf{x}}, \bar{\mbf{u}})\right\|\right) + \ell(\mbf{x}, \mbf{u}) - \ell(\bar{\mbf{x}}, \bar{\mbf{u}}).
\end{multline}
\end{subequations}
then  \eqref{eq:stack_sys} is said to be \emph{strictly $x-u$ dissipative with respect to}  $ (\bar{\mbf{x}},\,\bar{\mbf{u}})$. 

\item If, for all $N\in\mbb{N}$ and all $\mbf{x}_0 \in \mbf{X}_0$, the dissipation inequalities \eqref{eq:DI} hold along any optimal pair of  \eqref{eq:OCP}, 
then \emph{OCP \eqref{eq:OCP} is said to be (strictly) $x-u$ dissipative with respect to  $ (\bar{\mbf{x}},\,\bar{\mbf{u}})$. }
\item Moreover, if 2) or 3) hold with $\alpha_\ell\left(\left\|(\mbf{x}, \mbf{u})-(\bar{\mbf{x}}, \bar{\mbf{u}})\right\|\right)$ replaced by $\alpha_\ell\left(\left\|\mbf{x}-\bar{\mbf{x}}\right\|\right)$, then system \eqref{eq:stack_sys}, respectively, OCP \eqref{eq:OCP} are said to be strictly $x$ dissipative with respect to  $ (\bar{\mbf{x}},\,\bar{\mbf{u}})$. $\eDef$    
\end{enumerate}
\end{definition}
Observe the fact that for the dynamics \eqref{eq:stack_sys} any state $\bar{\mbf{x}}$ constitutes a controlled equilibrium with corresponding $\bar{\mbf{u}} = 0$. Moreover, note that the dissipativity of OCP \eqref{eq:OCP} only depends on the regularization $\ell$ and not on the loss function $\ell_\mathrm{f}$.

\subsection{Turnpikes in DNN Training} \label{sec:results}
Consider
\begin{align*}
\mcl{Q}_\varepsilon &\doteq  \left\{k \in \mbb{N}_{[0, N-1]}\,|\, \left\|\mbf{x}^\star_k-\bar{\mbf{x}}\right\| \leq \varepsilon\right\} \quad
\widehat{\mcl{Q}}_\varepsilon  \doteq \mbb{N}_{[0, N-1]} \setminus \mcl{Q}_\varepsilon.
\end{align*}

\begin{assumption}[Strict dissipativity of OCP \eqref{eq:OCP}]\label{ass:DI}
For the given data $\mbb{D}= \{\mbf{x}_0, \mbf{y}\}$, with $\mbf{x}_0\in\mbf{X}_0$, OCP \eqref{eq:OCP} is strictly $x$ dissipative with respect to $(\bar{\mbf{x}}, \bar{\mbf{u}})$.  $\eDef$
\end{assumption}
While Assumption \ref{ass:doable}  requires finite-time reachability of some unconstrained minimizer of the loss function, the next assumption defines an exponential reachability property with respect to a specific equilibrium  $(\bar{\mbf{x}}, \bar{\mbf{u}})$. To this end, we distinguish  the  $x$-projection of the data $\mbb{X}$---which corresponds to the initial condition $\mbf{x}_0$ in the OCPs \eqref{eq:OCP} and \eqref{eq:OCPcon}---from the set of possible initial conditions $\mbf{X}_0 \subseteq \mbb{R}^{d\cdot D}$ of the ensemble dynamics \eqref{eq:stack_sys}. Put differently, we have that $\mathrm{vect}(\mbb{X}) \in \mbf{X}_0 \subseteq \mbb{R}^{d\cdot D}$.
\begin{assumption}[Exponential reachability]\label{ass:ExpReach}
 There exist constants $\rho \in [0, 1)$ and $\beta >0$, an infinite-horizon control $\tilde{\mbf{u}}: \mbb{N}_{[0, \infty)} \to \mbb{R}^{d^2 +d}$, and a class $\mcl{K}$ function $\hat\alpha_\ell:\mbb{R}_0^+ \to \mbb{R}_0^+$ such that, for all $\mbf{x}_0\in\mbf{X}_0$,
\[
\hat\alpha_\ell(\left\|(\tilde{\mbf{x}}_k, \tilde{\mbf{u}}_k)-(\bar{\mbf{x}}, \bar{\mbf{u}})\right\|) \leq \beta\rho^k
\]
and $\ell(\mbf{x}, \mbf{u}) \leq \hat\alpha_\ell(\left\|(\mbf{x}, \mbf{u})-(\bar{\mbf{x}}, \bar{\mbf{u}})\right\|)$ on $\mbb{R}^{D} \times \mbb{R}^{d^2+d}$.
 $\eDef$
\end{assumption}
First we analyze the structure of optimal solutions implied by strict dissipativity, i.e., the next result establishes a turnpike property in OCP \eqref{eq:OCP}.

\begin{proposition}[Turnpikes in DNN training]\label{prop:TP}
Suppose that Assumptions \ref{ass:DI} -- \ref{ass:ExpReach} hold. Moreover, suppose that, for all $\mbf{x} \in \mbf{R}^\star_N(\mbf{x}_0)$, the storage function $\lambda$ is bounded from below. Let $\lambda$ be bounded on $\mbf{X}_0$.  Then, there exist constants $\Lambda, \hat V >0$ such that, for any $\gamma \in\mbb{R}$
\begin{itemize}
\item $\# \mcl{Q}_\varepsilon \geq N - \dfrac{\Lambda +\hat V}{\alpha_\ell(\varepsilon)}$, respectively, $\# \widehat{\mcl{Q}}_\varepsilon \leq \dfrac{\Lambda +\hat V}{\alpha_\ell(\varepsilon)}$,
\end{itemize}
where $\# \mcl{Q}_\varepsilon $ is the cardinality of the set $\mcl{Q}_\varepsilon$.  $\eDef$
\end{proposition}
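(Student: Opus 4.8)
The plan is to follow the standard dissipativity-based route to turnpike estimates, specialized to OCP \eqref{eq:OCP}. First I would evaluate the strict $x$-dissipation inequality (the version of \eqref{eq:DI_str} with $\alpha_\ell(\|\mbf{x}-\bar{\mbf{x}}\|)$, cf. item 4 of Definition \ref{def:DI}) along the optimal pair $(\mbf{x}^\star_k,\mbf{u}^\star_k)$ and sum it over $k\in\mbb{N}_{[0,N-1]}$. The storage-function differences telescope to $\lambda(\mbf{x}^\star_N)-\lambda(\mbf{x}^\star_0)$, and since Assumption \ref{ass:ExpReach} forces $\ell(\bar{\mbf{x}},\bar{\mbf{u}})\leq\hat\alpha_\ell(0)=0$ (so, by non-negativity, $\ell(\bar{\mbf{x}},\bar{\mbf{u}})=0$), the steady-state stage-cost term drops out. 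Rearranging yields
\[
\sum_{k=0}^{N-1} \alpha_\ell\left(\left\|\mbf{x}^\star_k - \bar{\mbf{x}}\right\|\right) \leq \lambda(\mbf{x}^\star_0) - \lambda(\mbf{x}^\star_N) + \sum_{k=0}^{N-1} \ell(\mbf{x}^\star_k, \mbf{u}^\star_k).
\]
The whole argument then reduces to bounding the right-hand side by a constant that is independent of the horizon $N$.

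The two boundedness arguments proceed as follows. For the storage term, $\lambda(\mbf{x}^\star_0)=\lambda(\mbf{x}_0)$ is bounded above because $\lambda$ is assumed bounded on $\mbf{X}_0$, while $\lambda(\mbf{x}^\star_N)$ is bounded below by hypothesis on $\mbf{R}^\star_N(\mbf{x}_0)$; setting $\Lambda\doteq\sup_{\mbf{X}_0}\lambda-\inf_{\mbf{R}^\star_N}\lambda$ gives $\lambda(\mbf{x}^\star_0)-\lambda(\mbf{x}^\star_N)\leq\Lambda$. For the accumulated regularization cost I would invoke optimality together with the exponential reachability of Assumption \ref{ass:ExpReach}: the reachability control $\tilde{\mbf{u}}$ restricted to $\mbb{N}_{[0,N-1]}$ is feasible for \eqref{eq:OCP}, so (using $\gamma\,\ell_{\mathrm{f}}(\mbf{x}^\star_N,\mbf{y})\geq 0$)
\[
\sum_{k=0}^{N-1} \ell(\mbf{x}^\star_k, \mbf{u}^\star_k) \leq V_N^\gamma(\mbf{x}_0) \leq \sum_{k=0}^{N-1} \ell(\tilde{\mbf{x}}_k, \tilde{\mbf{u}}_k) + \gamma\,\ell_{\mathrm{f}}(\tilde{\mbf{x}}_N, \mbf{y}).
\]
The dominating bound $\ell(\mbf{x},\mbf{u})\leq\hat\alpha_\ell(\|(\mbf{x},\mbf{u})-(\bar{\mbf{x}},\bar{\mbf{u}})\|)$ together with the geometric decay $\hat\alpha_\ell(\cdot)\leq\beta\rho^k$ collapses the sum to the convergent geometric series $\sum_k\beta\rho^k\leq\beta/(1-\rho)$, while continuity of $\ell_{\mathrm{f}}$ and $\tilde{\mbf{x}}_N\to\bar{\mbf{x}}$ keep the terminal term bounded uniformly in $N$. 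This produces the constant $\hat V$.

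Finally I would close with the elementary counting step: for every $k\in\widehat{\mcl{Q}}_\varepsilon$ one has $\|\mbf{x}^\star_k-\bar{\mbf{x}}\|>\varepsilon$, hence $\alpha_\ell(\|\mbf{x}^\star_k-\bar{\mbf{x}}\|)>\alpha_\ell(\varepsilon)$ by monotonicity of the class-$\mcl{K}$ function, so that $\#\widehat{\mcl{Q}}_\varepsilon\,\alpha_\ell(\varepsilon)\leq\sum_{k=0}^{N-1}\alpha_\ell(\|\mbf{x}^\star_k-\bar{\mbf{x}}\|)\leq\Lambda+\hat V$. Dividing gives $\#\widehat{\mcl{Q}}_\varepsilon\leq(\Lambda+\hat V)/\alpha_\ell(\varepsilon)$, and the complementary bound on $\#\mcl{Q}_\varepsilon$ follows from $\#\mcl{Q}_\varepsilon+\#\widehat{\mcl{Q}}_\varepsilon=N$.

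I expect the main obstacle to be the second step of the $N$-uniform bounding, i.e., controlling the accumulated regularization cost. The clean inequality $\sum\ell(\mbf{x}^\star_k,\mbf{u}^\star_k)\leq V_N^\gamma(\mbf{x}_0)$ relies on discarding the non-negative term $\gamma\,\ell_{\mathrm{f}}(\mbf{x}^\star_N,\mbf{y})$, which is immediate for $\gamma\geq 0$ but requires care for $\gamma<0$, where one additionally needs the terminal loss along the optimal trajectory to remain bounded (equivalently, well-posedness of \eqref{eq:OCP}); consequently the constant $\hat V$ will in general depend on $\gamma$. A secondary technical point is ensuring that the suprema and infima of $\lambda$ defining $\Lambda$ are genuinely finite and $N$-independent, which is precisely what the boundedness hypotheses on $\lambda$ over $\mbf{X}_0$ and $\mbf{R}^\star_N(\mbf{x}_0)$ are there to guarantee.
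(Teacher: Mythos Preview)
Your proposal is correct and follows essentially the same route as the paper's own proof: sum the strict $x$-dissipation inequality along the optimal pair, telescope the storage differences, bound $\lambda(\mbf{x}_0)-\lambda(\mbf{x}^\star_N)$ by a constant $\Lambda$ via the assumed boundedness of $\lambda$, bound the accumulated stage cost by $\hat V$ via optimality and the exponentially reachable trajectory of Assumption~\ref{ass:ExpReach}, and finish with the counting argument on $\widehat{\mcl{Q}}_\varepsilon$. Your treatment is in fact slightly more careful than the paper's in two places: you justify $\ell(\bar{\mbf{x}},\bar{\mbf{u}})=0$ explicitly from $\hat\alpha_\ell(0)=0$, and you correctly flag that discarding $\gamma\,\ell_{\mathrm f}(\mbf{x}^\star_N,\mbf{y})$ is only immediate for $\gamma\geq 0$, whereas the paper simply asserts $V_N^\gamma(\mbf{x}_0)\leq \hat V$ and drops the terminal term without comment.
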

\begin{proof}
Without loss of generality, we set $\ell(\bar{\mbf{x}}, \bar{\mbf{u}}) = 0$.
Moreover, Assumption \ref{ass:ExpReach}  gives that there exists $\hat V \in \mbb{R}$
$V_N^{\gamma}(\mbf{x}_0) \leq \hat V$.
The strict dissipation inequality \eqref{eq:DI_str} implies that
\[
\lambda(\mbf{x}_N^\star)) - \lambda(\mbf{x}_0) \leq -\sum_{k=0}^{N-1} \alpha_\ell(\left\|\mbf{x}^\star_k-\bar{\mbf{x}}\right\|) + \ell(\mbf{x}^\star_k, \mbf{u}^\star_k).
\]
Hence 
\begin{multline*}
 \lambda(\mbf{x}^\star_N) - \lambda(\mbf{x}_0) +\sum_{k=0}^{N-1} \alpha_\ell(\left\|\mbf{x}^\star_k-\bar{\mbf{x}}\right\|)  \leq V_N^{\gamma}(\mbf{x}_0) -  \gamma\ell_\mathrm{f}(\mbf{x}^\star_N, \mbf{y}).
\end{multline*}
Observe that due the assumption that the storage is bounded from below on $\mbf{R}_N^\star(\mbf{x}_0)$, and from above on $\mbf{X}_0$. Hence  there exists $\Lambda \in \mbb{R}$, 
$
\lambda(\mbf{x}^\star_N) - \lambda(\mbf{x}_0) \geq -\Lambda$.
From the above, we obtain
\[
(N-\#\mcl{Q}_\varepsilon)\alpha_\ell(\varepsilon) \leq \sum_{k=0}^{N-1} \alpha_\ell(\left\|\mbf{x}^\star_k-\bar{\mbf{x}}\right\|)\leq \Lambda + \hat V.
\]
Rearranging gives
$\#\mcl{Q}_\varepsilon \geq N - \frac{\Lambda + \hat V}{\alpha_\ell(\varepsilon)}$ and
$\# \widehat{\mcl{Q}}_\varepsilon \leq \frac{\Lambda + \hat V}{\alpha_\ell(\varepsilon)}$.
\end{proof}
It is worth to be remarked that the result above establishes a turnpike property not only for $\mbf{x}_0 = \mathrm{vect}(\mbb{X})$ but indeed on a set of initial conditions $\mbf{X}_0$. This in turn implies that small pertubations of the data $\mbb{D}$ should not affect the statement (provided Assumption~\ref{ass:ExpReach} holds).
So far, our analysis has not made any explicit assumption on the structure of $\ell$---besides Assumption \ref{ass:ExpReach}---which could easily be re-formulated without reference to $\ell$. Moreover, the turnpike property established in the previous result does not depend crucially on the considered loss function $\ell_\mathrm{f}$ but mostly on the regularization stage-cost $\ell$. Hence it makes sense to choose $\ell$ having the considered loss function in mind. 
Subsequently, we suppose that 
\begin{equation} \label{eq:LQell}
\ell(\mbf{x}, \mbf{u}) = r\|\mbf{u}\|^{s_x}_{p_x} + q\|\mbf{x}-\bar{\mbf{x}}\|^{s_u}_{p_u}, \quad \bar{\mbf{x}} \in \mbf{X}^\star(\mbf{y}_0), 
\end{equation}
with $q,r >0$. We summarize the parameters of the stage cost $\ell$ by writing $\pi \doteq (s_x, s_u, p_x, p_u, q, r)$. Without loss of generality, we temporarily set $s_x =s_u =2$, $p_x =p_u =2$ for the further analysis. As we will see in Proposition \ref{prop:infnorm} one may want to consider $p_x = \infty$ to reduce the dependence of the actual bounds on the cardinality of the data set $\mbb{D}$. 

That is, we consider a strictly convex regularization with respect to some unconstrained minimizer of $\ell_\mathrm{f}$. 
\begin{lemma}[Strict dissipativity of OCP \eqref{eq:OCP}]\label{lem:DI}
Suppose that the regularization stage costs satisfies \eqref{eq:LQell} with $q,r>1$, that  Assumption \ref{ass:ExpReach} holds with respect to $(\bar{\mbf{x}}, \mbf{0})$ penalized in $\ell$. Then system \eqref{eq:stack_sys} and  OCP \eqref{eq:OCP} are strictly $x-u$ dissipative with respect to $(\bar{\mbf{x}}, \mbf{0})$. Moreover, $\lambda(\mbf{x}) \equiv 0$ and
\[\qquad \alpha_\ell(\left\|(\mbf{x}, \mbf{u})-(\bar{\mbf{x}}, \bar{\mbf{u}})\right\|) =  \nu\ell(\mbf{x}, \mbf{u}),\quad  \nu \in (0,1]. \qquad \quad \eDef
\]
 
\end{lemma}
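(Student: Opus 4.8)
The plan is to verify Definition \ref{def:DI} directly with the trivial storage function $\lambda \equiv 0$. First I would record that $(\bar{\mbf{x}}, \mbf{0})$ is a controlled equilibrium of \eqref{eq:stack_sys}: since Assumption \ref{ass:sigma} gives $\sigma(0)=0$, the choice $\mbf{u}=\mbf{0}$ (i.e. $A_k=0$, $b_k=0$) yields $f(\bar{\mbf{x}}, \mbf{0}) = \bar{\mbf{x}} + \sigma(0) = \bar{\mbf{x}}$, as already observed after Definition \ref{def:DI}. With $\lambda \equiv 0$, which is trivially non-negative, the left-hand side of the strict dissipation inequality \eqref{eq:DI_str} vanishes identically, so the whole statement collapses to exhibiting a class $\mcl{K}$ function $\alpha_\ell$ with $\alpha_\ell(\|(\mbf{x}, \mbf{u}) - (\bar{\mbf{x}}, \mbf{0})\|) \le \ell(\mbf{x}, \mbf{u}) - \ell(\bar{\mbf{x}}, \mbf{0})$.

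Next I would evaluate the stage cost at the equilibrium. Substituting the quadratic form \eqref{eq:LQell} with $s_x=s_u=p_x=p_u=2$ gives $\ell(\bar{\mbf{x}}, \mbf{0}) = r\|\mbf{0}\|_2^2 + q\|\bar{\mbf{x}} - \bar{\mbf{x}}\|_2^2 = 0$, so the required inequality reduces to $\alpha_\ell(\|(\mbf{x}, \mbf{u}) - (\bar{\mbf{x}}, \mbf{0})\|) \le \ell(\mbf{x}, \mbf{u})$. It then remains only to bound the stage cost below by a class $\mcl{K}$ function of the joint error norm. Writing $\|(\mbf{x}, \mbf{u}) - (\bar{\mbf{x}}, \mbf{0})\|_2^2 = \|\mbf{x} - \bar{\mbf{x}}\|_2^2 + \|\mbf{u}\|_2^2$ and using $q, r > 1$ (so both coefficients dominate one), I obtain $\ell(\mbf{x}, \mbf{u}) \ge \min(q,r)\,\|(\mbf{x}, \mbf{u}) - (\bar{\mbf{x}}, \mbf{0})\|_2^2$, and symmetrically $\ell \le \max(q,r)\,\|(\mbf{x}, \mbf{u}) - (\bar{\mbf{x}}, \mbf{0})\|_2^2$. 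Hence $\ell$ is a continuous, positive-definite, radially unbounded function of the error, sandwiched between the genuine class $\mcl{K}$ functions $t \mapsto \min(q,r)\,t^2$ and $t \mapsto \max(q,r)\,t^2$; the upper comparison is precisely the bound posited in Assumption \ref{ass:ExpReach} for the equilibrium $(\bar{\mbf{x}}, \mbf{0})$ penalized in $\ell$, whereas the lower bound, which is what dissipativity actually requires, comes from the explicit form of \eqref{eq:LQell}.

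From this sandwich I would extract the asserted gain. Taking $\alpha_\ell(t) \doteq \nu \min(q,r)\,t^2$ gives a bona fide class $\mcl{K}$ function satisfying $\alpha_\ell(\|(\mbf{x}, \mbf{u}) - (\bar{\mbf{x}}, \mbf{0})\|) \le \ell(\mbf{x}, \mbf{u})$ for every $\nu \in (0,1]$, which verifies \eqref{eq:DI_str} with $\lambda \equiv 0$. Since $\lambda$ and $\alpha_\ell$ are independent of $\mbf{x}_0$ and $N$, the inequality holds along every admissible, and in particular every optimal, pair, so item 3) of Definition \ref{def:DI} also yields strict $x$-$u$ dissipativity of OCP \eqref{eq:OCP}.

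The only genuinely delicate point is the final identification $\alpha_\ell(\|\cdot\|) = \nu\ell$: when $q \neq r$ the stage cost is \emph{not} a function of the scalar $\|(\mbf{x}, \mbf{u}) - (\bar{\mbf{x}}, \mbf{0})\|$ alone, so this equality should be read as the sandwich comparison above, with $\nu$ absorbing the norm-equivalence ratio $\min(q,r)/\max(q,r) \in (0,1]$; in the balanced case $q=r$ it holds literally with $\alpha_\ell(t) = \nu q\, t^2$. I therefore expect the main care to lie in the class $\mcl{K}$ bookkeeping rather than in any hard estimate: confirming that the hypothesis $q,r>1$ is exactly what makes the unit-coefficient quadratic a valid lower bound while keeping $\nu \le 1$, and checking that invoking Assumption \ref{ass:ExpReach} is consistent with the equilibrium $(\bar{\mbf{x}}, \mbf{0})$ used here.
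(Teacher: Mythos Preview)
Your proposal is correct and essentially matches the paper's approach: the paper simply invokes the available storage characterization of dissipativity from \cite{Willems72a} and omits the details, and since the supply rate $\ell(\mbf{x},\mbf{u})-\ell(\bar{\mbf{x}},\mbf{0})=\ell(\mbf{x},\mbf{u})\ge 0$ is non-negative, the available storage is identically zero, which is exactly the trivial storage $\lambda\equiv 0$ you verify directly. Your careful remark on the reading of $\alpha_\ell(\|\cdot\|)=\nu\ell$ when $q\neq r$ is well taken and consistent with how the paper later uses the lemma (taking $\alpha_\ell(\varepsilon)=q\varepsilon^2$ in the proof of Theorem~\ref{thm:epsloss}).
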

The proof follows directly from the available storage characterization of dissipativity \cite{Willems72a}. It is thus omitted. 

\subsection{Depth Bounds for DNNs}
While Proposition \ref{prop:0loss} leverages an exact penalty function and a finite time reachability condition to ensure zero loss, it fails to provide an estimate on the required depth of the network under weaker (i.e. asymptotic) assumptions. The next result addresses this gap, i.e., it presents a quantitative bound on the network depth ensuring $\varepsilon$-loss. Let $\mcl{N}_\varepsilon(\bar{\mbf{x}})$ denote an $\varepsilon$-neighborhood of $\bar{\mbf{x}}$.

\begin{theorem}[Upper bound on required DNN depth $N$] \label{thm:epsloss}
Let Assumptions \ref{ass:sigma} and \ref{ass:ExpReach} hold. Suppose that the mixed input-state regularization \eqref{eq:LQell} is used with $s_x =s_u =2 = p_x =p_u$ and $q,r>1$, and that the loss function $\ell_\mathrm{f}$ is locally Lipschitz on $\mcl{N}_\varepsilon(\bar{\mbf{x}})$ with constant $L_{\ell_\mathrm{f}}$. 
Then, for all $\gamma > 0$ and all $\varepsilon >0$, choosing
the network depth  in OCP \eqref{eq:OCP} as
\begin{equation}\label{eq:N}
N \geq \hat{N}(\varepsilon) \doteq \dfrac{\hat V}{\alpha_\ell(\varepsilon)} = \dfrac{\beta}{(1-\rho)q\varepsilon^2}
\end{equation}
 with $\beta,\rho\geq 0$ from Assumption \ref{ass:ExpReach} guarantees that 
\[\ell_\mathrm{f}(\mbf{x}^\star_N,\mbf{y}) 
\leq \dfrac{L_{\ell_\mathrm{f}}}{\gamma} \varepsilon. \] 
Moreover,  we have that $\textrm{dist}(\mbf{x}^\star_N, \mbf{X}^\star(\mbf{y})) \leq \varepsilon$.$\eDef$
\end{theorem}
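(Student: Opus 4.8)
The plan is to reduce the statement to the dissipativity/turnpike machinery already assembled in Lemma~\ref{lem:DI} and Proposition~\ref{prop:TP}, and then to convert the resulting state turnpike into a terminal-loss estimate through the local Lipschitz property of $\ell_\mathrm{f}$. First I would check that the hypotheses of Lemma~\ref{lem:DI} are met: with the quadratic regularization~\eqref{eq:LQell} ($s_x=s_u=p_x=p_u=2$, $q,r>1$) the lemma yields strict $x$--$u$ dissipativity of OCP~\eqref{eq:OCP} with trivial storage $\lambda\equiv 0$. Dropping the nonnegative input penalty in $\ell$ then gives the $x$-dissipativity form required by Proposition~\ref{prop:TP} with $\alpha_\ell(\|\mbf{x}-\bar{\mbf{x}}\|)=q\|\mbf{x}-\bar{\mbf{x}}\|^2$, so that $\alpha_\ell(\varepsilon)=q\varepsilon^2$. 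Because $\lambda\equiv 0$, the lower-storage constant in Proposition~\ref{prop:TP} collapses to $\Lambda=0$, which is what makes the eventual bound clean.

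Next I would produce the global constant $\hat V$. Using the infinite-horizon control $\tilde{\mbf{u}}$ of Assumption~\ref{ass:ExpReach} as a feasible candidate in~\eqref{eq:OCP} and summing the geometric majorant $\ell(\tilde{\mbf{x}}_k,\tilde{\mbf{u}}_k)\le\hat\alpha_\ell(\|(\tilde{\mbf{x}}_k,\tilde{\mbf{u}}_k)-(\bar{\mbf{x}},\bar{\mbf{u}})\|)\le\beta\rho^k$ over $k$ gives $V_N^\gamma(\mbf{x}_0)\le\sum_{k=0}^{\infty}\beta\rho^k=\frac{\beta}{1-\rho}\doteq\hat V$, where the terminal term is controlled by $\ell_\mathrm{f}(\bar{\mbf{x}},\mbf{y})=0$ (valid since $\bar{\mbf{x}}\in\mbf{X}^\star(\mbf{y})$, cf.~\eqref{eq:0lossStat}) together with the exponential convergence $\tilde{\mbf{x}}_k\to\bar{\mbf{x}}$. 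Substituting $\Lambda=0$, $\hat V=\frac{\beta}{1-\rho}$ and $\alpha_\ell(\varepsilon)=q\varepsilon^2$ into Proposition~\ref{prop:TP} reproduces the depth estimate $\hat N(\varepsilon)=\frac{\hat V}{\alpha_\ell(\varepsilon)}=\frac{\beta}{(1-\rho)q\varepsilon^2}$ of~\eqref{eq:N}.

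Then I would extract both conclusions from a single summed dissipation inequality. With $\lambda\equiv 0$ and $\ell(\bar{\mbf{x}},\bar{\mbf{u}})=0$, telescoping~\eqref{eq:DI_str} along the optimal pair yields $\sum_{k=0}^{N-1}q\|\mbf{x}^\star_k-\bar{\mbf{x}}\|^2+\gamma\ell_\mathrm{f}(\mbf{x}^\star_N,\mbf{y})\le V_N^\gamma(\mbf{x}_0)\le\hat V$, since both the input penalty and $\gamma\ell_\mathrm{f}$ are nonnegative. The state part is exactly the turnpike counting bound: for $N\ge\hat N(\varepsilon)$ at most $\hat N(\varepsilon)$ indices can violate $\|\mbf{x}^\star_k-\bar{\mbf{x}}\|\le\varepsilon$, which---together with the terminal penalty pinning the endpoint near $\bar{\mbf{x}}$---I would use to deliver $\|\mbf{x}^\star_N-\bar{\mbf{x}}\|\le\varepsilon$ and hence $\mathrm{dist}(\mbf{x}^\star_N,\mbf{X}^\star(\mbf{y}))\le\varepsilon$. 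For the loss bound I would invoke local Lipschitzness of $\ell_\mathrm{f}$ on $\mcl{N}_\varepsilon(\bar{\mbf{x}})$ with $\ell_\mathrm{f}(\bar{\mbf{x}},\mbf{y})=0$, so that $\ell_\mathrm{f}(\mbf{x}^\star_N,\mbf{y})\le L_{\ell_\mathrm{f}}\|\mbf{x}^\star_N-\bar{\mbf{x}}\|$, and combine this with the terminal estimate $\gamma\ell_\mathrm{f}(\mbf{x}^\star_N,\mbf{y})\le\hat V$ to arrive at the stated $\ell_\mathrm{f}(\mbf{x}^\star_N,\mbf{y})\le\frac{L_{\ell_\mathrm{f}}}{\gamma}\varepsilon$.

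I expect the main obstacle to be pinning the \emph{terminal} index $N$ rather than a generic interior index: Proposition~\ref{prop:TP} only guarantees that the bulk of the $\mbf{x}^\star_k$ lie in $\mcl{N}_\varepsilon(\bar{\mbf{x}})$, and in a pure Lagrange problem the optimal trajectory is free to exhibit a leaving arc at the end. Here the Mayer term $\gamma\ell_\mathrm{f}$, minimized precisely on $\mbf{X}^\star(\mbf{y})\ni\bar{\mbf{x}}$, has to be exploited---most cleanly through a principle-of-optimality comparison that holds the state at $\bar{\mbf{x}}$ (feasible since $\bar{\mbf{u}}=\mbf{0}$ is an equilibrium input for~\eqref{eq:stack_sys}) on the tail $\{k^\star,\dots,N\}$---to suppress this leaving arc and transfer the proximity to $k=N$. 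Closely tied to this is the constant bookkeeping that produces the factor $\tfrac1\gamma$ in the loss bound: it is the weight $\gamma$ on the terminal term, and not the generic Lipschitz estimate alone, that sharpens $L_{\ell_\mathrm{f}}\varepsilon$ to $\frac{L_{\ell_\mathrm{f}}}{\gamma}\varepsilon$, and making this interplay rigorous---including the finite-$N$ residual of the exponential approach---is the delicate part of the argument.
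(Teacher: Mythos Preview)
Your plan follows the paper's proof almost step for step: same $\hat V=\beta/(1-\rho)$ from Assumption~\ref{ass:ExpReach}, same $\Lambda=0$ from the trivial storage $\lambda\equiv0$ of Lemma~\ref{lem:DI}, same $\alpha_\ell(\varepsilon)=q\varepsilon^2$, and you correctly single out the ``leaving arc at $k=N$'' as the crux. Two places where your sketch is weaker than the paper's argument are worth flagging.

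First, the paper makes the terminal-state step concrete by a three-case tail analysis on $\mbb{N}_{[l,N]}$ once some $\mbf{x}^\star_l\in\mcl{N}_\varepsilon(\bar{\mbf{x}})$ has been reached: (i) the trajectory leaves and never returns, which is ruled out by comparison with the ``hold'' candidate; (ii) it leaves and re-enters, in which case the comparison forces re-entry into a strictly smaller $\mcl{N}_\delta(\bar{\mbf{x}})$ with $\delta\le\varepsilon-\eta$, and an induction on nested neighborhoods finishes; (iii) it stays, which is the assertion. Your principle-of-optimality idea is the right tool, but note that the comparison candidate must hold at the \emph{current} point $\mbf{x}^\star_l$ (feasible because every state is an equilibrium under $\mbf{u}=\mbf{0}$), not at $\bar{\mbf{x}}$, which the optimal trajectory need not actually pass through. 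The inductive shrinking in case~(ii) is the nontrivial ingredient you have not yet supplied.

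Second, your route to the factor $1/\gamma$ does not work as written: the two inequalities $\ell_\mathrm{f}(\mbf{x}^\star_N,\mbf{y})\le L_{\ell_\mathrm{f}}\|\mbf{x}^\star_N-\bar{\mbf{x}}\|$ and $\gamma\,\ell_\mathrm{f}(\mbf{x}^\star_N,\mbf{y})\le\hat V$ yield only $\min\{L_{\ell_\mathrm{f}}\varepsilon,\ \hat V/\gamma\}$, not $\tfrac{L_{\ell_\mathrm{f}}}{\gamma}\varepsilon$. In the paper the loss bound is obtained directly from $\mbf{x}^\star_N\in\mcl{N}_\varepsilon(\bar{\mbf{x}})$ via the Lipschitz property, and the weight $\gamma$ enters only through the tail-comparison inequality~\eqref{eq:bnd1} used inside the case analysis, not through the separate global estimate $\gamma\ell_\mathrm{f}\le\hat V$ that you propose to combine.
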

\begin{proof}
First note that due to Assumption \ref{ass:ExpReach} and the choice of the regularizing stage cost \eqref{eq:LQell}, we have that the constant $\hat V$ in Proposition \ref{prop:TP} is given by
$\hat V = \frac{\beta}{1-\rho} > 0$.
Moreover, Lemma \ref{lem:DI} gives that 
\[
\alpha_\ell(\left\|\mbf{x}-\bar{\mbf{x}})\right\|) \leq \ell(\mbf{x}, \mbf{u}) =\hat\alpha_\ell(\left\|(\mbf{x}, \mbf{u})-(\bar{\mbf{x}}, \bar{\mbf{u}})\right\|) 
\]
and $\Lambda = 0$.
As $\bar{\mbf{u}} = 0$ and $s_x=2$ we have $\alpha_\ell(\varepsilon) = q\varepsilon^2$. Hence the bound of Proposition \ref{prop:TP} equates to 
$\# \widehat{\mcl{Q}}_\varepsilon \leq \frac{\hat V}{\alpha_\ell(\varepsilon)} = 
 \frac{\beta}{(1-\rho)q\varepsilon^2}$.
In other words, the time the optimal solutions can spend outside of the $\varepsilon$-neighborhood of $\bar{\mbf{x}}$ is bounded from above by $\frac{\beta}{(1-\rho)q\varepsilon^2}$. Hence, for sufficiently large $N$, there exists $l \in \mbb{N}_{[0, N-1]}$ for which 
$
\left\|\mbf{x}^\star_{l}-\bar{\mbf{x}}\right\| \leq \varepsilon$.
Moreover, observe that if $\mbf{x}_N^\star \in \mcl{N}_\varepsilon(\bar{\mbf{x}})$,  then the estimate $\ell_\mathrm{f}(\mbf{x}^\star_N,\mbf{y}) \leq \frac{L_{\ell_\mathrm{f}}}{\gamma} \varepsilon$ follows immediately. Therefore, we now show $\mbf{x}_N^\star \in \mcl{N}_\varepsilon(\bar{\mbf{x}})$ for sufficiently large $N$.

As the entire state space $\R^{d\cdot D}$ corresponds to equilibria of \eqref{eq:stack_sys}, for any $\varepsilon \geq 0$, the choice of $A_k = 0, b_k =0$ renders $\mcl{N}_\varepsilon(\bar{\mbf{x}})$ forward invariant under \eqref{eq:stack_sys}.  The performance associated to this choice implies the following upper bound on the performance on the truncated horizon $\mbb{N}_{[l,N]}$
\begin{equation} \label{eq:bnd1}
\sum_{k=l}^{N-1} \ell(\mbf{x}^\star_{k}, \mbf{u}^\star_{k}) + \gamma\ell_\mathrm{f}(\mbf{x}^\star_{N}, \mbf{y}) \leq (N-l)q\varepsilon^2 + \gamma L_{\ell_\mathrm{f}}\varepsilon,
\end{equation}
provided that $\ell_\mathrm{f}$ is Lipschitz on $\mcl{N}_\varepsilon(\bar{\mbf{x}})$. By assumption, if $\varepsilon$ is small enough this will be the case.

Now, we distinguish three cases:
\begin{itemize}
\item[(i)] the optimal trajectory $\mbf{x}^\star_k$ leaves  $\mcl{N}_\varepsilon(\bar{\mbf{x}})$;
\item[(ii)]  $\mbf{x}^\star_k$ leaves and re-enters  $\mcl{N}_\varepsilon(\bar{\mbf{x}})$;
\item[(iii)] $\mbf{x}^\star_k$ remains in  $\mcl{N}_\varepsilon(\bar{\mbf{x}})$.
\end{itemize}
Observe that cases of leave-enter-leave can be reduced to enter-leave by considering the final exit time. 
Moreover, Case (iii) corresponds to the assertion and does not require further analysis.

Case (i): At time $m\geq l$, the optimal trajectory $\mbf{x}^\star_k$ leaves  $\mcl{N}_\varepsilon(\bar{\mbf{x}})$, i.e.,
$\left\|\mbf{x}^\star_{k}-\bar{\mbf{x}}\right\| > \varepsilon \quad \forall k \geq m$.
This implies the following performance bound
$
\sum_{k=m}^{N-1} \ell(\mbf{x}^\star_{k}, \mbf{u}^\star_{k}) + \gamma\ell_\mathrm{f}(\mbf{x}^\star_{N}, \mbf{y}) > (N-m)q\varepsilon^2 + \gamma L_{\ell_\mathrm{f}}\varepsilon$,
which in turn contradicts the bound from \eqref{eq:bnd1} for $l=m$. Hence,  solutions permanently leaving  $\mcl{N}_\varepsilon(\bar{\mbf{x}})$ are suboptimal.

Case (ii): At time $m_1\geq l$, the optimal trajectory $\mbf{x}^\star_k$ leaves  $\mcl{N}_\varepsilon(\bar{\mbf{x}})$ and at  $m_2= m_1+\Delta m > m_1 \geq l$ it re-enters. This implies the lower bound on $\mbb{N}_{[m_1,N]}$
\begin{multline} \label{eq:bnd2}
\sum_{k=m}^{N-1} \ell(\mbf{x}^\star_{k}, \mbf{u}^\star_{k}) 
\geq  \Delta m q(\varepsilon+\eta)^2 + 
\sum_{k=m_2}^{N-1} \ell(\mbf{x}^\star_{k}, \mbf{u}^\star_{k}) 
\end{multline}
where $\eta >0$ corresponds to the distance of $\mbf{x}^\star_{k}$, $k \in \mbb{N}_{[m_1, m_2)}$ to  $\mcl{N}_\varepsilon(\bar{\mbf{x}})$. For the l.h.s. of \eqref{eq:bnd2} to not exceed the r.h.s. of \eqref{eq:bnd1}, the optimal trajectory $\mbf{x}^\star_k$ has to enter  $\mcl{N}_\delta(\bar{\mbf{x}})$ with $\delta \leq \varepsilon -\eta$. So either, case (ii) does not happen, or the optimal trajectory $\mbf{x}^\star_k$ enters  $\mcl{N}_\delta(\bar{\mbf{x}})$ on $\mbb{N}_{[N-\tilde m, N]}$ for some $\tilde m \geq m_1+\Delta m$. 
Now, repeat the analysis for  $\mcl{N}_\delta(\bar{\mbf{x}})$. Induction shows that $\mbf{x}_N^\star(\mbf{x}_0) \in \mcl{N}_\varepsilon(\bar{\mbf{x}})$.

Finally, we have to show that $\textrm{dist}(\mbf{x}^\star_N, \mbf{X}^\star(\mbf{y})) \leq \varepsilon$. Above we have derived that $\mbf{x}_N^\star(\mbf{x}_0) \in \mcl{N}_\varepsilon(\bar{\mbf{x}})$. As $\bar{\mbf{x}} \in
\mbf{X}^\star(\mbf{y})$, we have that  $\|\mbf{x}_N^\star(\mbf{x}_0) - \bar{\mbf{x}}\| \geq \textrm{dist}(\mbf{x}^\star_N, \mbf{X}^\star(\mbf{y}))$. This concludes the proof. 
\end{proof}

Let $N^\star$ denote the minimal horizon length $N$ for which Assumption \ref{ass:doable} holds and recall that $\gamma^\star$ denotes the minimal value of $\gamma$ from Proposition~\ref{prop:0loss}, which ensures exact loss minimization. This allows to formulate the following corollary to Proposition \ref{prop:0loss} and Theorem \ref{thm:epsloss}.
\begin{corollary}[Zero loss with finite depth]\label{cor:exact}
Let Assumptions \ref{ass:sigma} -- \ref{ass:doable} hold, suppose that the mixed input-state regularization \eqref{eq:LQell} is used  with $s_x =s_u =2 = p_x =p_u$ and $q,r>1$, that the loss function $\varphi$ is locally Lipschitz on $\mcl{N}_\varepsilon(\bar{\mbf{x}})$, and that second-order sufficient conditions hold at $(\mbf{x}^\star_k, \mbf{u}^\star_k)$.
Provided $\gamma \geq \gamma^\star$ for $\gamma^\star$ from Proposition~\ref{prop:0loss}, that 
\[N\geq \hat{N}(\varepsilon) \doteq \dfrac{\beta}{(1-\rho)q\varepsilon^2}\geq N^\star
\] 
 holds in OCP \eqref{eq:OCP}, and that $\beta,\rho\geq 0$ from Assumption \ref{ass:ExpReach} are considered, then
$ \textrm{dist}(\mbf{x}^\star_N, \mbf{X}^\star(\mbf{y})) =0$. $\eDef$
\end{corollary}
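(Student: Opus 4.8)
The plan is to obtain the claim as a direct synthesis of Proposition~\ref{prop:0loss} and Theorem~\ref{thm:epsloss}, where the former supplies the \emph{exactness} of the loss minimization and the latter supplies an \emph{explicit} admissible depth. The key observation is that both results concern the same optimal solution $(\mbf{x}^\star_k, \mbf{u}^\star_k)$ of OCP~\eqref{eq:OCP}, so their conclusions may simply be intersected once their hypotheses are verified to hold simultaneously.

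First I would check that the hypotheses of Proposition~\ref{prop:0loss} are met. Assumption~\ref{ass:sigma} is assumed, the second-order sufficient conditions are assumed at $(\mbf{x}^\star_k, \mbf{u}^\star_k)$, and---crucially---the requirement $N \geq \hat N(\varepsilon) \geq N^\star$ forces the depth to exceed the minimal horizon $N^\star$ for which the zero-loss reachability Assumption~\ref{ass:doable} holds; hence Assumption~\ref{ass:doable} is in force for this $N$. With all hypotheses in place and $\gamma \geq \gamma^\star$, Proposition~\ref{prop:0loss} yields $\mbf{x}^\star_N \in \mbf{X}^\star(\mbf{y})$, equivalently $\ell_\mathrm{f}(\mbf{x}^\star_N, \mbf{y}) = 0$. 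Since $\mbf{x}^\star_N$ then lies \emph{in} the minimizer set $\mbf{X}^\star(\mbf{y})$, one obtains $\textrm{dist}(\mbf{x}^\star_N, \mbf{X}^\star(\mbf{y})) = 0$ immediately, which is the assertion.

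Theorem~\ref{thm:epsloss} enters to render the depth requirement explicit and quantitative. It certifies that the turnpike depth $\hat N(\varepsilon) = \beta/((1-\rho)q\varepsilon^2)$ already guarantees $\textrm{dist}(\mbf{x}^\star_N, \mbf{X}^\star(\mbf{y})) \leq \varepsilon$ under the asymptotic reachability Assumption~\ref{ass:ExpReach} alone, i.e.\ without invoking exact finite-time reachability. Because $\hat N(\varepsilon)$ grows without bound as $\varepsilon \to 0$, while $N^\star$ is finite under Assumption~\ref{ass:doable}, one can always choose $\varepsilon$ small enough that the chained inequality $\hat N(\varepsilon) \geq N^\star$ holds; for any such $\varepsilon$ the explicit turnpike depth is simultaneously a valid depth for Proposition~\ref{prop:0loss}, and the $\varepsilon$-loss bound is sharpened to the exact identity $\textrm{dist} = 0$.

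I expect the only genuine obstacle to be a bookkeeping one: verifying that the assumption sets of the two invoked results are mutually compatible and refer to the same equilibrium $\bar{\mbf{x}} \in \mbf{X}^\star(\mbf{y})$ and the same optimizer of OCP~\eqref{eq:OCP}. In particular, one should confirm that the strictly convex regularization~\eqref{eq:LQell} centered at $\bar{\mbf{x}} \in \mbf{X}^\star(\mbf{y})$ is consistent with the exact-penalty reformulation underlying Proposition~\ref{prop:0loss}, and that the threshold $\gamma^\star = |\mu^\star|$ is independent of $\varepsilon$ and $N$, so that the conditions on $\gamma$ and on $N$ may be imposed independently. Once this compatibility is established, no further estimation is required: the quantitative content was already delivered by Theorem~\ref{thm:epsloss} and the exactness by Proposition~\ref{prop:0loss}.
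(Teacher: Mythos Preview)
Your proposal is correct and follows essentially the same route as the paper: the authors' proof is a one-liner stating that the corollary follows directly from combining Proposition~\ref{prop:0loss} (which delivers exactness once $\gamma \geq \gamma^\star$ and $N \geq N^\star$) with Theorem~\ref{thm:epsloss} (which supplies the explicit depth formula $\hat N(\varepsilon)$). Your more detailed bookkeeping discussion is sound but not strictly needed---the paper does not elaborate on the compatibility of the two assumption sets or on the independence of $\gamma^\star$ from $\varepsilon$ and $N$.
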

The proof follows directly from combining Proposition \ref{prop:0loss} and Theorem \ref{thm:epsloss}. The result shows that if the horizon bound from Theorem \ref{thm:epsloss} is combined with a large value of $\gamma$, then one may even achieve zero loss. 

\subsection{Bounds Independent of the Number of Samples $\# \mbb{D}$}
At this point, it is fair to ask how to specify the remaining degrees of freedom in the regularization $\ell$ from \eqref{eq:LQell}? It is straighforward to see that the choice $s_x =s_u =2$  is not crucial in the proofs of the results above. On other hand, from a numerical point of view, it is promising to  use $p_x =p_u = s_x =s_u =2$ and $q\gg r=1$, i.e., a convex quadratic regularization stage cost. Yet, employing the two-norm means that the bound $\hat{N}(\varepsilon)$ will scale  with $\dim(\mbf{x}) = d\cdot D$, which scales with the number of consider data samples $D=\#\mbb{D}$. Observe that normalizing the stage cost $\ell$ with $D$ is no viable remedy as in this case $\alpha_\ell$ has to be normalized by $D$ as well. This suggests to consider $p_x  =\infty$ in $\ell$ from \eqref{eq:elli_class} as this choice does not scale with $D=\#\mbb{D}$. However, optimizing over $\infty$-norm objectives with non-linear equality constraints  is not straightforward.

In this context, the next result shows how one may  compute an a-posteriori estimate $\hat{N}$ using the  $\infty$-norm, while the training considers a numerically more favorable norm. To this end, let $\ell(\mbf{x},\mbf{u}; \pi)$ denote the stage cost \eqref{eq:LQell} with parameters $\pi =(s_x, s_u, p_x, p_u, q, r)$. Any choice with $q,r >0$, $p_x, p_u >1$ and $s_x, s_u >1$ is said to be admissible. Moreover, let $\hat{N}(\varepsilon; \pi)$ and $V_N^\gamma(\mbf{x_0}; \pi)$ denote the respective dependence on $\pi$.

\begin{proposition}[Transferring depth 	bounds between norms]\label{prop:infnorm}
Let $(\tilde{\mbf{x}}_k, \tilde{\mbf{u}}_k)$ be an optimal solution of OCP \eqref{eq:OCP} with a mixed input-state regularization $\ell(\mbf{x},\mbf{u}; \tilde\pi)$ from \eqref{eq:LQell}. Except for the choice of $\tilde \pi$, let the conditions of Corollary \ref{cor:exact} hold. Then, for any admissible parametrization $\pi=(s_x, s_u, p_x, p_u, q, r)$, the horizon
\begin{equation} \label{eq:NfromJ}
N \geq \hat{N}(\varepsilon;\pi) \doteq \frac{1}{q\cdot\varepsilon^{s_x}} \sum_{k=0}^{\hat{N}(\varepsilon;\tilde \pi)}\ell(\hat{\mbf{x}}_k,\,\hat{\mbf{u}}_k; \pi)\geq N^\star
\end{equation} 
in OCP \eqref{eq:OCP} ensures  $\ell_\mathrm{f}(\mbf{x}^\star_N,\mbf{y}) = 0$.  $\eDef$
\end{proposition}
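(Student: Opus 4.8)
The plan is to reduce the claim for a general admissible parametrization $\pi$ to the already-established bound for the specific parametrization $\tilde\pi$ used in training, by exploiting the fact that the depth bound in Theorem~\ref{thm:epsloss} is really a statement about \emph{accumulated stage cost}. First I would recall the mechanism behind Theorem~\ref{thm:epsloss}: under Assumption~\ref{ass:ExpReach} and Lemma~\ref{lem:DI} (with $\lambda\equiv 0$, $\Lambda=0$, $\nu\in(0,1]$), the turnpike estimate of Proposition~\ref{prop:TP} gives that the total accumulated regularization cost along the optimal trajectory is bounded by $\hat V=\beta/(1-\rho)$, and that the number of indices for which $\|\mathbf{x}^\star_k-\bar{\mathbf{x}}\|>\varepsilon$ is at most $\hat V/\alpha_\ell(\varepsilon)$. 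For a general admissible $\pi$ the stage-cost lower bound on $\mathcal{N}_\varepsilon(\bar{\mathbf{x}})^c$ is $\alpha_\ell(\varepsilon)=q\,\varepsilon^{s_x}$ rather than $q\varepsilon^2$, so the number of exterior indices is bounded by $\tfrac{1}{q\varepsilon^{s_x}}\sum_k \ell(\hat{\mathbf{x}}_k,\hat{\mathbf{u}}_k;\pi)$, which is exactly $\hat N(\varepsilon;\pi)$ in \eqref{eq:NfromJ}.

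Next I would make precise the role of the training solution $(\tilde{\mathbf{x}}_k,\tilde{\mathbf{u}}_k)$. The key observation is that the optimal trajectory obtained by training with $\tilde\pi$ is a \emph{feasible, admissible} trajectory of the ensemble dynamics \eqref{eq:stack_sys}, and therefore its accumulated $\pi$-cost, $\sum_{k=0}^{\hat N(\varepsilon;\tilde\pi)}\ell(\hat{\mathbf{x}}_k,\hat{\mathbf{u}}_k;\pi)$, furnishes a valid upper bound $\hat V_\pi$ on the optimal value $V^\gamma_N(\mathbf{x}_0;\pi)$ of the OCP posed with the new parametrization $\pi$ (up to the loss term, which is controlled as in Theorem~\ref{thm:epsloss}). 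In other words, evaluating the $\tilde\pi$-optimal trajectory under the $\pi$-cost plays the same role that Assumption~\ref{ass:ExpReach} played in the original proof: it supplies the constant $\hat V$. Substituting this a-posteriori value of $\hat V$ into the turnpike count of Proposition~\ref{prop:TP}, with the exterior lower bound $q\varepsilon^{s_x}$, yields $\#\widehat{\mathcal{Q}}_\varepsilon \le \hat N(\varepsilon;\pi)$, so that choosing $N\ge\hat N(\varepsilon;\pi)$ forces at least one interior index and, by the case analysis of Theorem~\ref{thm:epsloss}, forces $\mathbf{x}^\star_N\in\mathcal{N}_\varepsilon(\bar{\mathbf{x}})$.

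Finally I would close the loop to zero loss exactly as in Corollary~\ref{cor:exact}: since the stated hypotheses include those of Corollary~\ref{cor:exact}, in particular $\gamma\ge\gamma^\star$, second-order sufficiency, and $N\ge N^\star$, the additional requirement $\hat N(\varepsilon;\pi)\ge N^\star$ guarantees that Assumption~\ref{ass:doable} is operative at the chosen depth, so Proposition~\ref{prop:0loss} upgrades the $\varepsilon$-loss conclusion $\mathrm{dist}(\mathbf{x}^\star_N,\mathbf{X}^\star(\mathbf{y}))\le\varepsilon$ to $\ell_\mathrm{f}(\mathbf{x}^\star_N,\mathbf{y})=0$. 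The bound being an \emph{a-posteriori} quantity is not a defect here, precisely because it is computed from an already-available training run.

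I expect the main obstacle to be rigorously justifying that the $\tilde\pi$-optimal trajectory yields a legitimate upper bound $\hat V_\pi$ for the \emph{$\pi$-posed} problem uniformly in $N$, rather than only along the truncated horizon $\mathbb{N}_{[0,\hat N(\varepsilon;\tilde\pi)]}$ over which the sum in \eqref{eq:NfromJ} is taken. One must argue that beyond index $\hat N(\varepsilon;\tilde\pi)$ the $\tilde\pi$-trajectory has already reached $\mathcal{N}_\varepsilon(\bar{\mathbf{x}})$, where the invariance choice $A_k=0,\,b_k=0$ incurs $\pi$-cost bounded by $q\varepsilon^{s_x}$ per step; making the accounting of these tail-contributions consistent with the turnpike count — so that the finite sum truly dominates the infinite-horizon accumulated $\pi$-cost minus the interior contributions — is the delicate bookkeeping step, and it is where one must be careful that $\alpha_\ell$ is measured in the $\pi$-norm throughout.
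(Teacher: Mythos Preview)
Your approach is essentially the same as the paper's: both recognize that the depth bound has the generic form $\hat N(\varepsilon)=\hat V/\alpha_\ell(\varepsilon)$, that for an admissible $\pi$ one has $\alpha_\ell(\varepsilon)=q\,\varepsilon^{s_x}$, and that the $\tilde\pi$-optimal trajectory, being feasible for the dynamics, supplies a concrete $\hat V$ when its cost is re-evaluated under $\pi$. The conclusion then follows by re-running the machinery of Theorem~\ref{thm:epsloss}/Corollary~\ref{cor:exact} with this a-posteriori $\hat V$.

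The one place where the paper is sharper than your outline is precisely the point you flag as the ``delicate bookkeeping step.'' You propose to handle the tail beyond $\hat N(\varepsilon;\tilde\pi)$ by arguing that the $\tilde\pi$-trajectory has entered $\mcl{N}_\varepsilon(\bar{\mbf{x}})$ and then bounding the per-step $\pi$-cost there by $q\varepsilon^{s_x}$. The paper instead invokes Corollary~\ref{cor:exact} \emph{for the $\tilde\pi$-problem itself}: its hypotheses give $\ell_\mathrm{f}(\hat{\mbf{x}}_{\hat N(\varepsilon;\tilde\pi)},\mbf{y})=0$, so the terminal-cost contribution of the extended feasible trajectory vanishes exactly, and the finite sum $\sum_{k=0}^{\hat N(\varepsilon;\tilde\pi)}\ell(\hat{\mbf{x}}_k,\hat{\mbf{u}}_k;\pi)$ is asserted to upper-bound $V_N^\gamma(\mbf{x}_0;\pi)$ directly. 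In other words, what you anticipate as an obstacle is dissolved by applying Corollary~\ref{cor:exact} first to the training run, rather than only at the end to the $\pi$-problem. Incorporating this observation would tighten your argument and align it exactly with the paper's (very brief) proof.
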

\begin{proof}
Observe that the depth bound can be written as $\hat{N}(\varepsilon) \doteq \frac{\hat V}{\alpha_\ell(\varepsilon)}$, i.e., as the quotient of an upper bound on the value function $V_N^\gamma(\mbf{x_0}; \pi)$ with the lower bound on $\ell$. The conditions of Corollary~\ref{cor:exact} imply that $\ell_\mathrm{f}(\hat{\mbf{x}}_N,\mbf{y}) = 0$ and hence, for any admissible stage cost parametrization $\pi$, the sum  
$
\tilde V \doteq \sum_{k=0}^{\hat{N}(\varepsilon;\tilde \pi)}\ell(\hat{\mbf{x}}_k,\,\hat{\mbf{u}}_k; \pi)
$
 gives an upper bound on the value function $V_N^\gamma(\mbf{x_0}; \pi)$. Moreover, for any admissible choice  $\pi$, we have $\alpha_\ell(\varepsilon) = q\|\varepsilon\|_{p_x}^{s_x} =  q\cdot\varepsilon^{s_x}$.
\end{proof}
While at the first glance the above result is of merely technical value, it enables to use numerically favourable choices for the parametrization of $\ell$ for training purposes, while the depth bounds can rely on the $\infty$-norm for the state regularization, which does not depend on the cardinality of $\mbb{D}$. 

\subsection{Empirical Risk Minimization and the Choice of $\varepsilon$}
In principle, the choice $\varepsilon >0$ may lead to a slight performance degradation in terms of loss minimization. However, a value of $\varepsilon$ too small might render the bound of \eqref{eq:N} conservative as $\hat{N}=\infty$ for $\varepsilon \to 0$. Hence, depending on the considered ML task at hand (classification or regression), it may be advisable to choose $\varepsilon$ not too small. In essence, in classification problems the choice of $\varepsilon$ is governed by the size of the neighborhood of $\bar{\mbf{x}}\in\mbf{X}^\star(\mbf{y})$ which still allows to classify the data points. 
To elaborate this, let $g:\mbb{R}^d \to \mbb{R}^m$, $y = g(x)$ 
denote map from the $N$th layer of \eqref{eq:sys} to the label prediction $y$. That is, $y^\star(x_0 ^i) = g(x_N^\star(x_0^i))$ is the propagation of the data point $x_0^i$ through the DNN \eqref{eq:sys}, using optimal weights obtained via OCP~\eqref{eq:OCP}, concatenated with $g$.

Suppose, as before, that \eqref{eq:0lossStat} holds and let $\bar{\mbf{x}} = \begin{pmatrix}\bar x^{1\top},\dots,\bar x^{i\top}, \dots, \bar x^{D\top}
\end{pmatrix}^\top$, then  $g$ may be defined as follows
\begin{equation}\label{eq:gdelta}
g(x^i) \doteq \left\{\begin{array}{l l}
\| x^i - \bar x^i \| < \delta & y = y^i \\
\| x^i - \bar x^i \| \geq \delta & y \not = y^i
\end{array} \right.,
\end{equation}
where $y^i \in \mbb{Y}$ is a finite set of classification labels and $\delta>0$ is the maximal radius of $\mcl{N}_\delta(\bar x^i)$ which still allows to exactly distinguish the underlying classes for the entire data set $\mbb{D}$ (provided it exists).
This choice of $g$ suggests the loss function for the $i$th data sample to be
$\ell^i_{\mathrm{f}}(x_N(x^i),y^i) = \| g(x_N^\star(x_0^i)) - y^i \|^s$,
which is not differentiable on $\| x^i - \bar x^i \| = \delta$ due to \eqref{eq:gdelta}.
In view of Theorem \ref{thm:epsloss}---$\textrm{dist}(\mbf{x}^\star_N, \mbf{X}^\star(\mbf{y})) \leq \varepsilon$---one may consider the following differentiable substitute
\begin{equation}\label{eq:elli_class}
\ell^i_{\mathrm{f}}(x_N(x_0^i),y^i) = \| x_N^\star(x_0^i) - \bar x^i\|^2
\end{equation}
which reformulates the loss in terms of the squared distance to $\bar x^i$. As in the proof of Theorem \ref{thm:epsloss} we have shown that $\mbf{x}_N^\star(\mbf{x}_0) \in \mcl{N}_\varepsilon(\bar{\mbf{x}})$, it is  clear that choosing $\varepsilon < \delta$ will lead to exact classification of all sample points in $\mbb{D}$.

Consider the index set
\[
\mcl{I}_N(\mbb{D}) \doteq  \{i \in \{1,\cdots, D\} \,|\, g(x_N^\star(x_0^i))  \not= y^i\}
\]
of misclassified samples from $\mbb{D}$. 
Then the empirical risk (of misclassification) is defined as
\[\mcl{R}_N(\mbb{D}) \doteq \dfrac{\#\mcl{I}_N(\mbb{D}) }{\#\mbb{D}},
\]
see \cite{Shalev14}.
The next result translates Theorem \ref{thm:epsloss} to zero empirical risk classification. 
\begin{proposition}[Zero empirical risk classification] \label{prop:0lossClass}
Let Assumptions \ref{ass:sigma} and \ref{ass:ExpReach} hold. Suppose that the mixed input-state regularization \eqref{eq:LQell} is used and consider $g$ from \eqref{eq:gdelta} and $\ell_\mathrm{f}^i$ from \eqref{eq:elli_class}. Suppose that there exists $\delta >0$, which provides perfect classification. 
Then, if $N > \hat{N}(\varepsilon=\delta)$, it holds that $\mcl{R}_N(\mbb{D})  = 0$. $\eDef$
\end{proposition}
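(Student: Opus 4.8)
The plan is to reduce the claim to Theorem~\ref{thm:epsloss}: that result controls the distance of the terminal state $\mbf{x}^\star_N$ to the regularization center $\bar{\mbf{x}}$, and once this distance is driven strictly below the classification radius $\delta$, the thresholding map $g$ from \eqref{eq:gdelta} necessarily returns the correct label for every sample. First I would verify that the hypotheses of Theorem~\ref{thm:epsloss} are in force: Assumptions~\ref{ass:sigma} and \ref{ass:ExpReach} are assumed, the regularization \eqref{eq:LQell} is used with the admissible parametrization $s_x=s_u=2=p_x=p_u$ and $q,r>1$, and the per-sample loss \eqref{eq:elli_class}, being a squared norm, is smooth and hence locally Lipschitz on $\mcl{N}_\varepsilon(\bar{\mbf{x}})$. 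For this loss the unconstrained minimizer set is the singleton $\mbf{X}^\star(\mbf{y})=\{\bar{\mbf{x}}\}$, so the conclusion $\textrm{dist}(\mbf{x}^\star_N,\mbf{X}^\star(\mbf{y}))\leq\varepsilon$ of Theorem~\ref{thm:epsloss} reads $\|\mbf{x}^\star_N-\bar{\mbf{x}}\|\leq\varepsilon$ whenever $N\geq\hat{N}(\varepsilon)$.

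The crucial step is to exploit the \emph{strict} inequality $N>\hat{N}(\delta)$ in order to produce a \emph{strict} proximity bound, which the open-ball classification rule \eqref{eq:gdelta} requires. Since $\hat{N}(\varepsilon)=\beta/((1-\rho)q\varepsilon^2)$ is continuous and strictly decreasing in $\varepsilon$, the strict inequality $N>\hat{N}(\delta)$ guarantees the existence of some $\varepsilon'\in(0,\delta)$ with $N\geq\hat{N}(\varepsilon')$. Applying Theorem~\ref{thm:epsloss} with this $\varepsilon'$ then yields $\mbf{x}^\star_N\in\mcl{N}_{\varepsilon'}(\bar{\mbf{x}})$, that is, $\|\mbf{x}^\star_N-\bar{\mbf{x}}\|\leq\varepsilon'<\delta$.

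It then remains to pass from the bound on the stacked terminal state to a per-sample bound. Writing $\mbf{x}^\star_N$ and $\bar{\mbf{x}}$ in their block structure with blocks $x_N^\star(x_0^i)$ and $\bar x^i$, the quadratic ($p_u=2$) stacked norm satisfies $\|x_N^\star(x_0^i)-\bar x^i\|^2\leq\sum_{j=1}^{D}\|x_N^\star(x_0^j)-\bar x^j\|^2=\|\mbf{x}^\star_N-\bar{\mbf{x}}\|^2\leq(\varepsilon')^2$, so each sample obeys $\|x_N^\star(x_0^i)-\bar x^i\|\leq\varepsilon'<\delta$. By the definition of $g$ in \eqref{eq:gdelta} this forces $g(x_N^\star(x_0^i))=y^i$ for every $i\in\{1,\dots,D\}$; hence the misclassification index set satisfies $\mcl{I}_N(\mbb{D})=\emptyset$, and $\mcl{R}_N(\mbb{D})=\#\mcl{I}_N(\mbb{D})/\#\mbb{D}=0$, as asserted.

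I expect the main obstacle to be the careful handling of the strict-versus-nonstrict inequality together with the transfer from the stacked norm to the per-sample norm appearing in $g$: Theorem~\ref{thm:epsloss} delivers only the non-strict bound $\leq\varepsilon$, whereas \eqref{eq:gdelta} demands $<\delta$, and if a norm other than $p_u=2$ (for instance the $\infty$-norm advocated in Proposition~\ref{prop:infnorm}) is employed, one must additionally invoke norm equivalence on $\mbb{R}^d$ to relate the block norm used by $g$ to the one penalized in $\ell$. Selecting $\varepsilon'$ strictly inside $(0,\delta)$, as above, is precisely what reconciles these requirements, so the argument closes without imposing any differentiability assumption on the nonsmooth classification map $g$.
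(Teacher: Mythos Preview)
Your argument is correct and follows exactly the route the paper intends: the paper itself omits the proof, pointing to the preceding discussion that Theorem~\ref{thm:epsloss} yields $\mbf{x}_N^\star\in\mcl{N}_\varepsilon(\bar{\mbf{x}})$ and that $\varepsilon<\delta$ forces correct classification of every sample. Your fleshing-out---using the strict inequality $N>\hat N(\delta)$ and the monotonicity of $\hat N(\cdot)$ to produce $\varepsilon'<\delta$, and then passing from the stacked $2$-norm bound to the per-sample bound---is precisely what the omitted proof would contain.
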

The proof follows directly from the considerations above and is thus omitted. 

Naturally, if the depth bounds $\hat{N}$ in \eqref{eq:N} are too conservative, the potential price to pay for zero empirical risk classification as per Proposition~\ref{prop:0lossClass} are insufficient generalization properties---i.e., a larger (true) risk of misclassification---due to over-fitting. Thus, we turn towards  a numerical example to analyze this aspect.

\section{Numerical Example -- Two Spiral Task}\label{sec:example}
\begin{figure*}[t!]
    \centering
    \begin{subfigure}[t]{0.3\textwidth}
	\includegraphics[scale=0.25]{./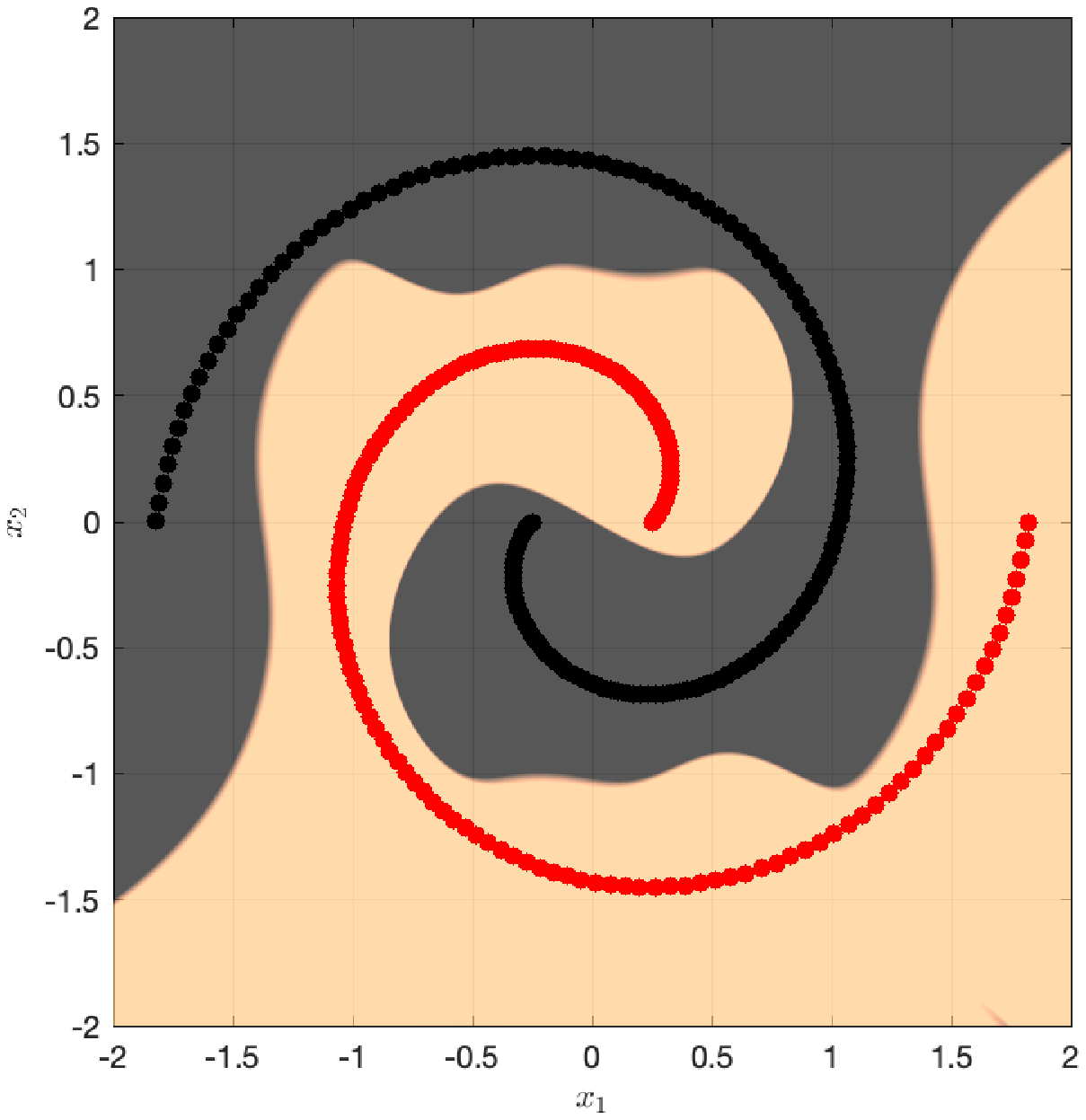}~
        \caption{$D=300$, no noise, and  $N=5$.  \label{fig:xygen_k5_ideal}}
    \end{subfigure}%
                \begin{subfigure}[t]{0.3\textwidth}
	\includegraphics[scale=0.25]{./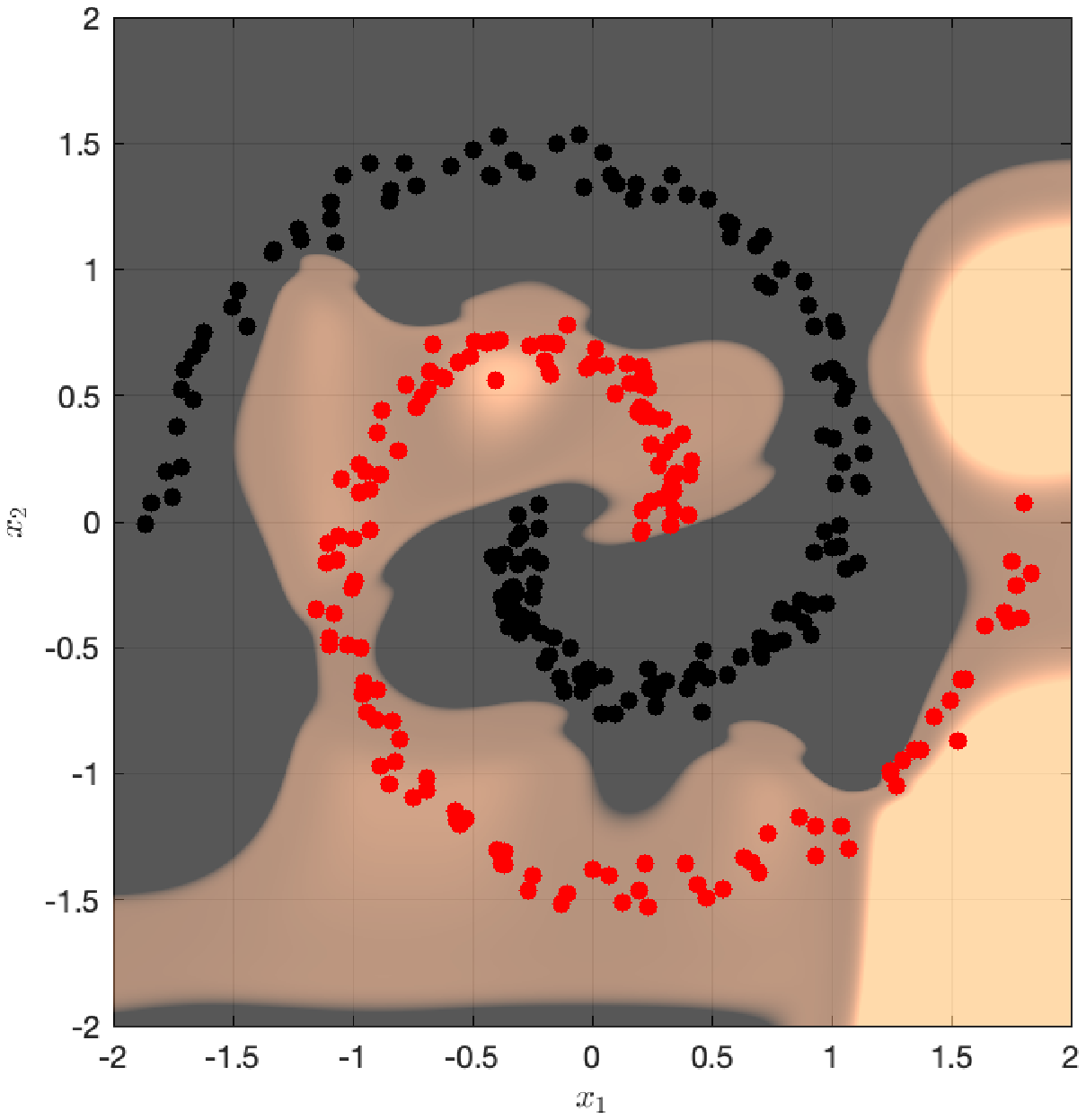}
        \caption{$D=300$, additive noise, and  $N=5$. \label{fig:xygen_k5}}
    \end{subfigure}%
        \begin{subfigure}[t]{0.3\textwidth}
        \centering
	\includegraphics[scale=0.25]{./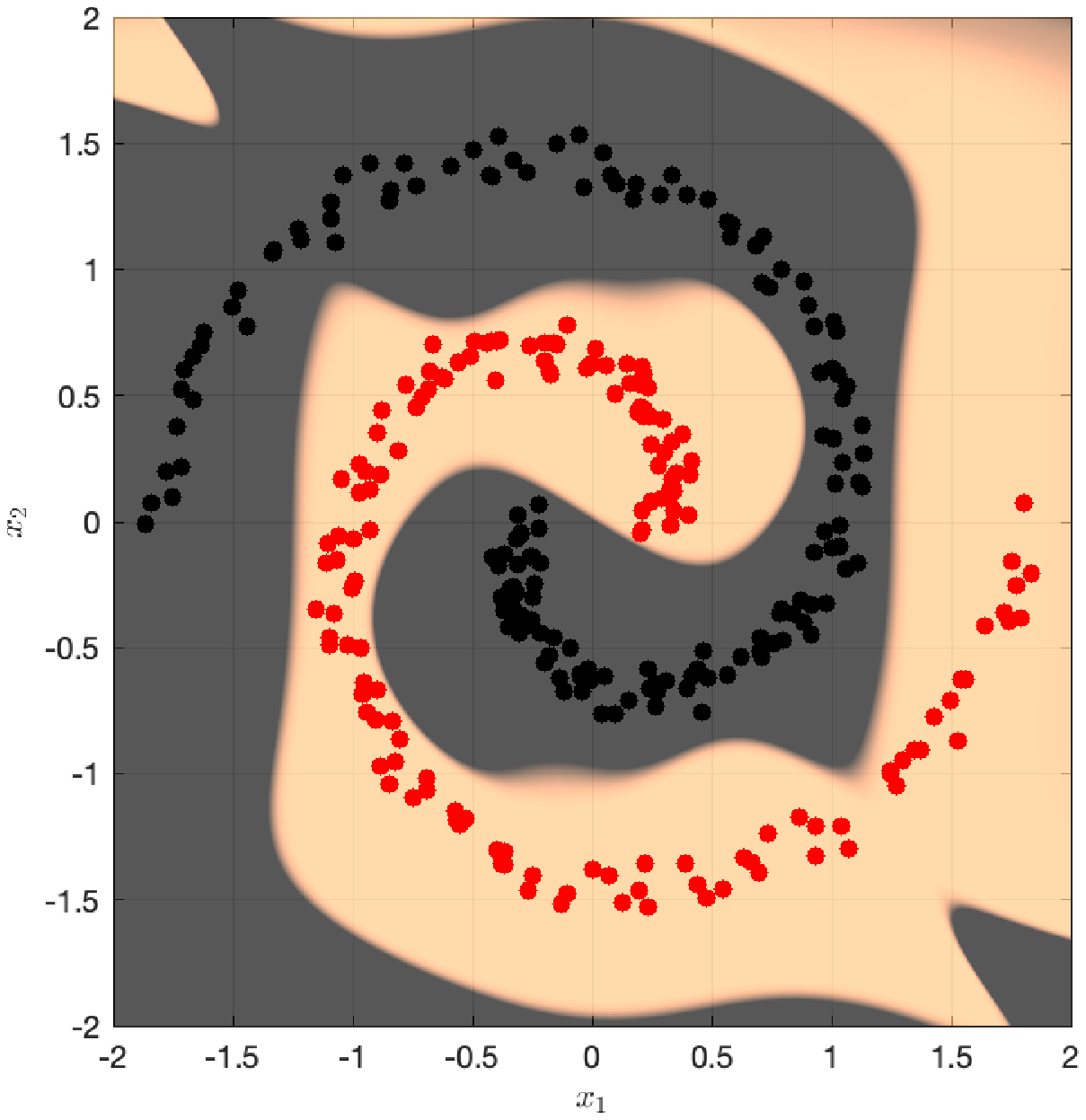}
       \caption{$D=300$, additive noise, and  $N=10$.  \label{fig:xygen}}
    \end{subfigure}
    ~ 
    \begin{subfigure}[t]{0.5\textwidth}
        \centering
        \includegraphics[scale=0.45]{./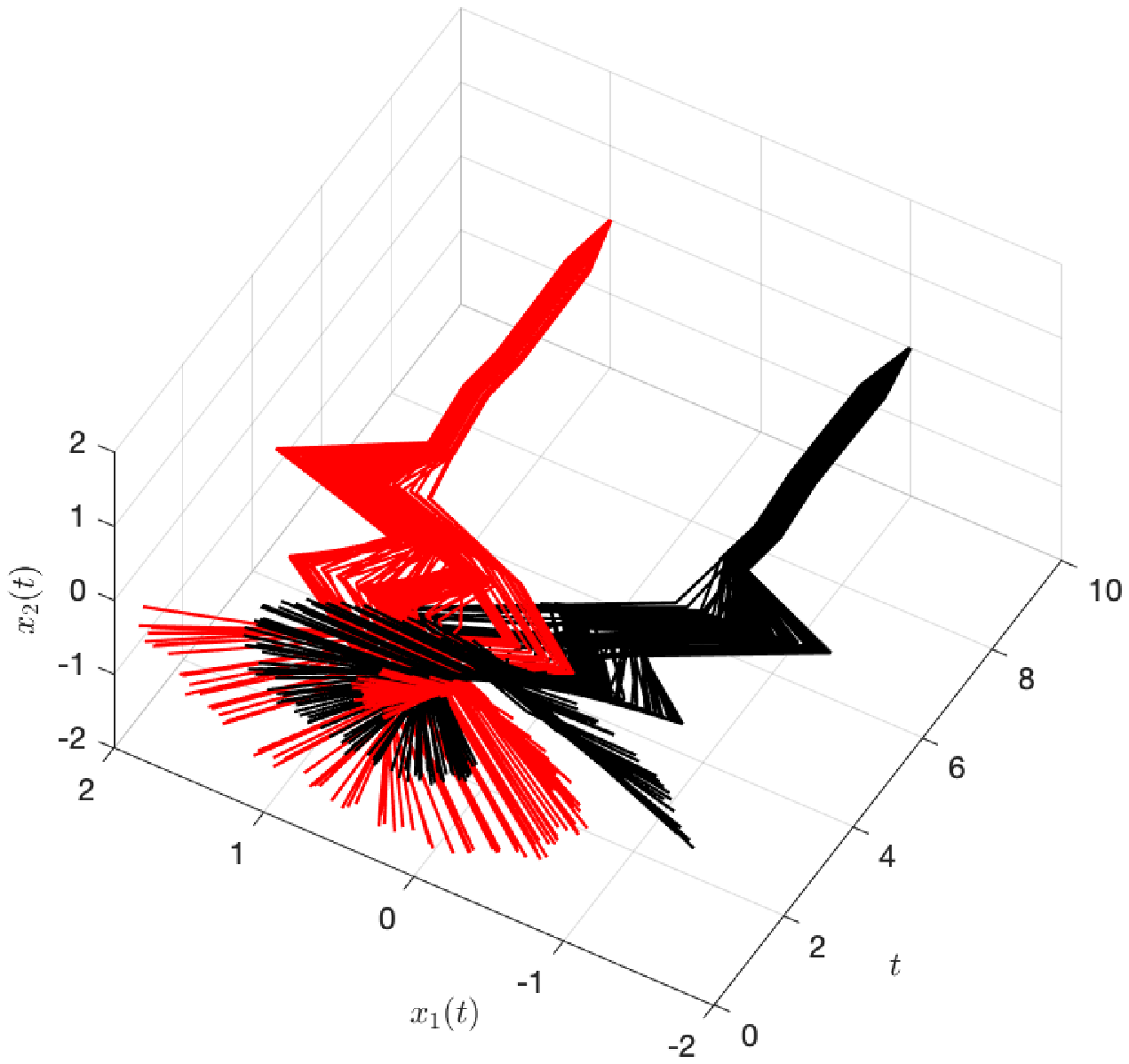}
        \caption{Evolution of data samples for $D=300$ with noise in \eqref{eq:data_gen}.
\label{fig:xy}}
    \end{subfigure}%
    \begin{subfigure}[t]{0.49\textwidth}
        \centering
        \includegraphics[scale=0.5]{./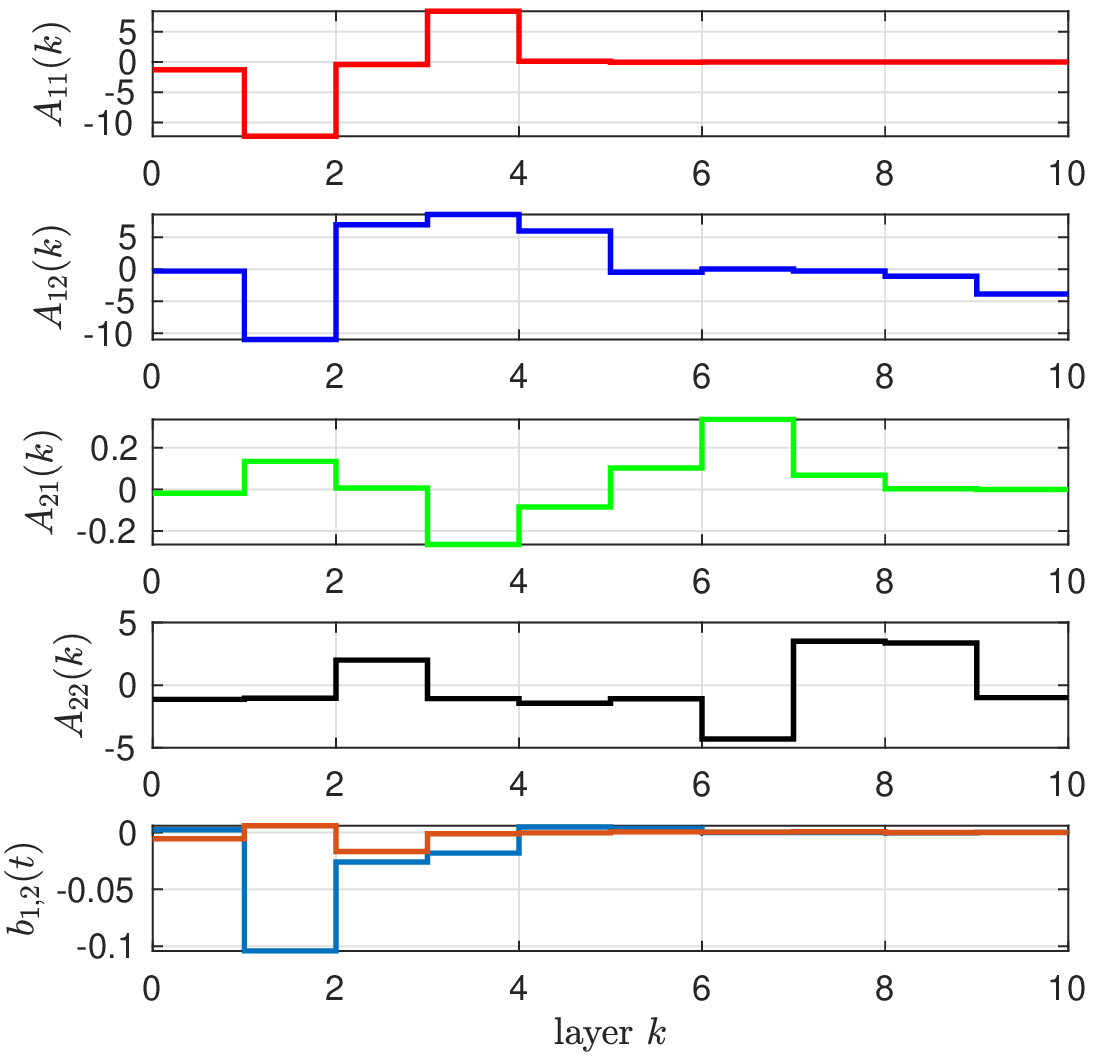}
        \caption{Evolution of the network weights for $D=300$ with noise in \eqref{eq:data_gen}. \label{fig:u}}
    \end{subfigure}
    \caption{Results for TST example.}
\end{figure*}
We consider the two dimensional Two Spiral Task (TST) data set. The TST problem refers to a binary classification of data points. As depicted in Figures \ref{fig:xygen_k5_ideal} -- \ref{fig:xygen}, the TST is build upon two intertwined spirals, which are not easy to classify in the given feature space. 
This allows to check whether the OCP-trained DNN is capable to represent complex decision boundaries (classification) or to learn underlying manifolds (regression). Since its first suggestion~\cite{Lang89} this problem has become a standard ML test case~\cite{Wah00,Chalup07}. 
The underlying manifold for the 2D learning data generates data samples via
\begin{align} \label{eq:data_gen}
{x}^{i}_{0} &= \begin{pmatrix} j^i \nu \cos\left(j^i+\varphi_{0c} \right) \\
							    j^i \nu \sin\left(j^i+\varphi_{0c} \right) \end{pmatrix}+x^i_{\text{noise}},
\end{align}
where $j$ increments angle and radius, $\nu=\frac{1}{4}$ is the TST start radius, and $\varphi_{0c}$ determines the initial angles. 
The increments are given by $j^i=\frac{2\pi}{D}i, i=1,\ldots,D$. The start angle $\varphi_{0c}$ is used for class assignment, i.e., $\varphi_{01}=0$ for the first class (red) and $\varphi_{02}=\pi$ for the second class (black), see Figure \ref{fig:xygen_k5_ideal}. 
Except the benchmark, each element of the learning data set ${x}^{i}\in \mbb{D}$ is superposed with uniformly distributed noise $x^i_{\text{noise}} \in \begin{bmatrix}-0.2 & 0.2\end{bmatrix}\times \begin{bmatrix}-0.2 & 0.2\end{bmatrix}$.
The goal of the classification is to map class one to $y^i = \begin{pmatrix}1&0\end{pmatrix}^\top$, and class two to $y^i=\begin{pmatrix}-1&0\end{pmatrix}^\top$. 

\subsection{OCP Formulation and Solution}
We consider OCP \eqref{eq:OCP} to train the DNN. 	
We use $\ell$ from \eqref{eq:LQell} with $\|\cdot\|^2_2$ for $\mbf{x}-\bar{\mbf{x}}$ and $\mbf{u}$, as well as $q = 10^{-2}$ and $r=10^{-3}$.\footnote{Actually, for sake of numerical stability, we rescale the inputs by factor $10^2$ such that in re-scaled input variables we have $r = 10^{-7}$.} Simulations are done using CasADi in Matlab (Version 3.55) \cite{Andersson19a} and IPOPT as NLP solver. The considered loss function is of mean-squared structure \eqref{eq:elli_class},
where the reference state for regularization coincides with the labels, i.e., $\bar x^i = y^i \in\{(1,0)^\top,~(-1,0)^\top\}$. Observe that for this loss function the set of minimizers is given by a singleton $\mbf{X}(\mbf{y}) = \{\bar{\mbf{x}} = \mbf{y}\}$. We set $ \gamma  = 10^2 D$.
The activation function is chosen as $\sigma(s) = \tanh(s)$.
Solving the TST example for $N=10$ and 300 noisy data points from Figure~\ref{fig:xygen}, we obtain the solutions depicted in Figure~\ref{fig:xy} and Figure~\ref{fig:u}. 
As one can see, the solution of OCP \eqref{eq:OCP} attains near zero loss. 
Moreover, the generalization properties of the trained DNN are illustrated in Figures~\ref{fig:xygen_k5_ideal},~\ref{fig:xygen_k5}, and~\ref{fig:xygen}.  
For  sample data $\mbb{D}$ has been perturbed by noise. 
Figures~\ref{fig:xygen_k5_ideal},~\ref{fig:xygen_k5}, and~\ref{fig:xygen} depict the decision-boundary as well the classification over a set within the boundaries $-2$ to $2$ for the case where the number of data points $D=300$ is kept constant.
For comparison the unperturbed data and its classification are given in Figures~\ref{fig:xygen_k5_ideal}. 
In all examples the decision-boundary remains qualitatively similar, networks with more layers tend to give a better generalization, see the areas, which are unsupported with learning data in Figure~\ref{fig:xygen} or compare to ideal data Figure~\ref{fig:xygen_k5_ideal}.

\subsection{Depth Bounds from A-posteriori Estimates}
As the solution of OCP \eqref{eq:OCP} delivers (near) zero loss on the training data, we now compare three different approaches to a-posteriori depth bounds: (i) we estimate the exponential constants $\beta, \rho>0$ from Assumption~\ref{ass:ExpReach} using the computed optimal  trajectories and use \eqref{eq:N}; (ii)  we use the bound from Proposition \ref{prop:infnorm} in  \eqref{eq:NfromJ} with the two norm; and (iii) we consider  \eqref{eq:NfromJ} for trajectorties computed via the two norm but the bound is evaluated in the $\infty$ norm. In other words,  (iii) leverages the result of Proposition~\ref{prop:infnorm} to estimate the depth based on the trajectories computed for $\pi = (2, 2, 2, 2, 10^{-2}, 10^{-3})$ in the metric implied by $\pi = (1, 2, \infty, 2, 1, 10^{-3})$. The $\infty$-norm penalization of the state deviation is motivated by the $\infty$ norm not growing with the cardinality of the data set $\mbb{D}$.
The depth bound form (i) is denoted as $\hat N_2(\beta, \rho)$, the one from (ii) as $\hat N_2$, and the one from (iii) as  $\hat N_\infty$.

To the end of estimating $\beta$ and $\rho$, we solve the NLP:
\begin{align} \label{eq:estN}
\min_{\beta, \rho \in \mbb{R}^+_0} &~\dfrac{\beta}{1-\rho} \nonumber
\text{ subject to }
\ell(\mbf{x}^\star_k, \mbf{u}^\star_k) \leq \beta\rho^k, \quad k \in \mbb{N}_{[0, N-1]}.
\end{align}

Observe that in the TST example classification is achieved if a data point propagated through the net reaches the interior of $\mcl{N}_{\delta=1}(\bar{\mbf{x}})$, cf. Proposition~\ref{prop:0lossClass}. Hence we set $\varepsilon=1$ in evaluating $\hat{N}$ in \eqref{eq:N} and \eqref{eq:NfromJ}. Also note that due to the choice $q = 10^{-2}$ in \eqref{eq:LQell} we have $\alpha_\ell(\varepsilon) = 10^{-2}$ for $\hat N_2(\beta, \rho)$ and $\hat N_2$. Table~\ref{tab:N} summarizes the constants $\beta$ and $\rho$ computed via the NLP above as well as the estimated depth bound $\hat{N}_2$ for variations of the network depth (used to solve \eqref{eq:OCP}) and the number of considered data points without noise.   

\begin{table}
\caption{Estimated net depths via \eqref{eq:N} for noise-free data.\label{tab:N}}
\begin{center}
\begin{tabular}{c  | c || c | c | c | c | c}
\hline
$N$ & $D$ & $\beta$  & $ \rho$ & $\hat{N}_2(\beta,\rho)$&$\hat{N}_2$&  $\hat{N}_\infty\phantom{\Big|}$\\
\hline
$5 $&$ 20	$&$ 0.75 $&$ 0.61 $&$ 2.42\cdot 10^2 $&$ 1.01\cdot10^2 $&$ \phantom{0}6.31$ \\
$5 $&$ 50	$&$ 1.85 $&$ 0.83 $&$ 1.11\cdot 10^3 $&$3.21\cdot10^2 $&$  \phantom{0}6.74$\\
$5 $&$ 100	$&$ 3.57 $&$ 0.83 $&$ 2.14\cdot 10^3$&$6.11\cdot10^2 $&$ \phantom{0}6.98$\\
$5 $&$ 250	$&$ 8.43 $&$ 0.67 $&$ 2.56\cdot 10^3 $&$1.23\cdot10^3 $&$ \phantom{0}6.56$\\
$5 $&$ 500	$&$ 14.4 $&$ 0.79 $&$  6.83\cdot 10^3  $&$3.36\cdot10^3 $&$ \phantom{0}7.53$\\
\hline
$10 $&$ 20	 $&$ 0.75 $&$ 0.60 $&$ 2.42\cdot 10^2 $&$1.09\cdot10^2 $&$ \phantom{0}7.00 $  \\
$10 $&$ 50	 $&$ 2.37 $&$ 0.68 $&$ 7.46\cdot 10^2 $&$3.69\cdot10^2 $&$ 11.06 $\\
$10 $&$ 100 $&$3.08 $&$ 0.73 $&$ 1.14\cdot 10^3 $&$5.34\cdot10^2 $&$ \phantom{0}8.39$\\
$10 $&$ 250 $&$ 7.34 $&$ 0.77 $&$ 3.20\cdot 10^3 $&$1.44\cdot10^3 $&$ \phantom{0}9.83$\\
$10 $&$ 500 $&$ 19.5 $&$ 0.58 $&$ 4.67\cdot 10^3 $&$2.05\cdot10^3 $&$ \phantom{0}7.01$\\
\hline
\end{tabular}
\end{center}
\end{table}
\begin{table}
\caption{Estimated net depths  via \eqref{eq:N} for noisy data.\label{tab:Nnoise}}
\begin{center}
\begin{tabular}{c  | c || c | c | c | c | c }
\hline
$N$ & $D$ & $\beta$  & $ \rho$ & $\hat{N}_2(\beta,\rho)$&$\hat{N}_2$&  $\hat{N}_\infty\phantom{\Big|}$\\
\hline
$5 $&$ 20	$&$ 0.87$&$ 0.83 $&$ 5.24\cdot 10^2 $&$1.45\cdot10^2 $&$ \phantom{0}7.42$ \\
$5 $&$ 50	$&$ 1.48 $&$ 0.83 $&$8.88\cdot 10^2 $&$2.88\cdot10^2 $&$  \phantom{0}7.20$\\
$5 $&$ 100	$&$ 3.67 $&$ 0.83 $&$ 2.20\cdot 10^3$&$6.27\cdot10^2 $&$ \phantom{0}7.15$\\
$5 $&$ 250	$&$ 8.83$&$ 0.83 $&$ 5.30\cdot 10^3 $&$1.58\cdot10^3 $&$ \phantom{0}8.57$\\
$5 $&$ 500	$&$ 15.9 $&$ 0.83 $&$  9.53\cdot 10^3  $&$3.56\cdot10^3 $&$ \phantom{0}8.40$\\
\hline
$10 $&$ 20	$&$ 1.01$&$ 0.75 $&$ 4.02\cdot 10^2 $&$1.82\cdot10^2 $&$ 11.50$ \\
$10 $&$ 50	$&$ 1.65 $&$ 0.68 $&$ 5.07\cdot 10^2 $&$2.35\cdot10^2 $&$  \phantom{0}8.02$\\
$10 $&$ 100	$&$ 2.86 $&$ 0.79 $&$ 1.34\cdot 10^3$&$5.89\cdot10^2 $&$ 11.38$\\
$10 $&$ 250	$&$ 7.91 $&$ 0.72 $&$ 2.80\cdot 10^3 $&$1.11\cdot10^3 $&$ \phantom{0}8.50$\\
$10 $&$ 500	$&$ 13.8 $&$ 0.83 $&$  7.94\cdot 10^3  $&$3.30\cdot10^3 $&$ 18.75$\\
\hline
\end{tabular}
\end{center}
\end{table}

As one can see, the obtained a-posteriori depth estimates $\hat N_2(\beta, \rho)$ range from $242$ to $6830$ if the two norm is used. Observe that the bound $\hat N_2(\beta, \rho)$ grows with  increasing cardinality of the data set $\#\mbb{D} =D$. The bound $\hat N_2$ which relies on \eqref{eq:NfromJ} is only marginally better ranging from $101$ to $3360$. In contrast the bound $\hat N_\infty$, which is based on the $\infty$-norm state penalization in $\ell$, delivers much smaller estimates ranging from $6.31$ to $11.06$. 
The trends on  $\hat N_2(\beta, \rho)$ and $\hat N_2$ remain unchanged in Table~\ref{tab:Nnoise} wherein noisy data sets $\mbb{D}$ are considered. The range of $\hat N_2(\beta, \rho)$ spans $524$ to $7940$ and $\hat N_2$ is between $145$ and $3560$. In contrast, $\hat N_\infty$ spans from $7.20$ to $18.8$. Again the $\infty$-norm state penalization in $\ell$ delivers much smaller estimates. Observe that the trend of $\hat N$ growing with $D$ is much less pronounced for $\hat N_\infty$. The average of values is centered around $7-8$. The outliers $>11$ and $18.75$ are likely due to the training problem \eqref{eq:OCP} being solved to local optimality. 
Finally, the trend that for increasing number of samples $D$ the estimated bounds $\hat{N}_2$, but also $\hat N_\infty$, increase, indicates that the reachability properties of the ensemble dynamics \eqref{eq:stack_sys} are  affected by the dimensionality of the stacked system state $\dim(\mbf{x}) = D$.  Yet, in view of the generalization plot in Figure \ref{fig:xygen}, we conclude that the reachability of $\bar{\mbf{x}}\in\mbf{X}^\star(\mbf{y})$ is sufficient but not necessary for classification. 

\section{Discussion and Open Problems}\label{sec:conc}\vspace*{-2mm}
This note has taken steps towards deriving depth bounds for deep neural networks via turnpike and dissipativity theory. 
The example of the previous section has  assessed the quality of the depth bound derived in Theorem~\ref{thm:epsloss}. 
The a-posteriori results of Tables~\ref{tab:N} and \ref{tab:Nnoise} indicate that the structured design of DNN is accessible to analysis techniques derived in context of turnpike properties of optimal control problems. 
However,  several issues require further and future research.

Arguably the most pressing question is how to extend the approach from  a-posteriori computation of $\beta$ and $\rho$ based on a trained DNN to a-priori estimation based on a given loss function $\ell_\mathrm{f}$, the activation function $\sigma$, and available data? Moreover, the results from Tables~\ref{tab:N} and \ref{tab:Nnoise} indicate that estimates could be obtained by considering only parts of the data $\mbb{D}$, similarly to stochastic gradient techniques, which also rely only on partial data sets. Moreover, we have shown how one can use numerically favorable squared two norms in the regularization stage cost, while building the depth estimates via the $\infty$ norm to avoid scaling with the number of data points. 

One key feature driving the success of DNNs in ML applications are their generalization properties, i.e., DNNs provide reasonable classification/regression capabilities for data points not contained in the training data $\mbb{D}$. Yet, it is not fully clear how to design the loss function $\ell_\mathrm{f}$ and the stage cost regularization $\ell$ to foster generalization. As mentioned at the end of Section~\ref{sec:example}, reachability of $\bar{\mbf{x}}$ is sufficient for classification but not necessary. Hence there is evident need for further analysis on the choice of $\bar{\mbf{x}}$ or on the choice of more general formulations of $\ell$. Intuitively, the OCP formulation of the training problem also suggests the analysis of robustness properties of solutions to \eqref{eq:sys}. 

Finally, the presented numerical results are a first indication of the potential of  systems theory approaches towards the analysis of DNN training via OCPs. Evidently, further numerical examples with other loss functions and considering established stochastic optimization methods (e.g. stochastic gradient methods which only consider a subset of the training data in each optimization step) call for future work.

\printbibliography

\end{document}